\def \P{\mathbb{P}} 
\def \bfE {\mathbb{E}}
\newcommand{\indep}{\perp \!\!\! \perp}
\newtheorem{theorem}{Theorem}[section]
\newtheorem{proposition}[theorem]{Proposition}
\newtheorem{lemma}[theorem]{Lemma}
\theoremstyle{definition}
\newtheorem{definition}[theorem]{Definition}
\newtheorem{assumption}[theorem]{Assumption}
\theoremstyle{remark}
\title{Learning Counterfactual Outcomes \\ Under  Rank Preservation}
\author{
Peng Wu$^{1}$, Haoxuan Li$^{2}$, Chunyuan Zheng$^{2}$, Yan Zeng$^{1}$, \\
 {\bf Jiawei Chen}$^{3}$, {\bf Yang Liu}$^{4}$, {\bf Ruocheng Guo}$^{5}$\thanks{Work done while at ByteDance Research, ruocheng.guo@bytedance.com.},~  {\bf Kun Zhang}$^{6,7}$  \vspace{10pt} \\
  $^1$Beijing Technology and Business University  \quad 
  $^2$Peking University  \\
  $^3$Zhejiang University   \quad  
   $^4$University of California, Santa Cruz \quad
  $^5$Intuit AI Research  \\
    $^6$Carnegie
Mellon University \quad 
    $^7$Mohamed 
bin Zayed University of Artificial Intelligence
}
\begin{document}

\maketitle

\begin{abstract}
Counterfactual inference aims to estimate the counterfactual outcome at the individual level given knowledge of an observed treatment and the factual outcome, with broad applications in fields such as epidemiology, econometrics, and management science. Previous methods rely on a known structural causal model (SCM) or assume the homogeneity of the exogenous variable and strict monotonicity between the outcome and exogenous variable. 
In this paper, we propose a principled approach for identifying and estimating the counterfactual outcome. 
We first introduce a simple and intuitive rank preservation assumption to identify the counterfactual outcome without relying on a known structural causal model.
Building on this, we propose a novel ideal loss for theoretically unbiased learning of the counterfactual outcome and further develop a kernel-based estimator for its empirical estimation. Our theoretical analysis shows that the rank preservation assumption is not stronger than the homogeneity and strict monotonicity assumptions, and shows that the proposed ideal loss is convex, and the proposed estimator is unbiased. Extensive semi-synthetic and real-world experiments are conducted to demonstrate the effectiveness of the proposed method. 
\end{abstract}

\section{Introduction}

Understanding causal relationships is a fundamental goal across various domains, such as epidemiology~\citep{Hernan-Robins2020}, 
econometrics~\citep{Imbens-Rubin2015}, and management science~\citep{kallus2020-policy}. Pearl and Mackenzie~\citep{Pearl-Mackenzie2018} define the three-layer causal hierarchy—association, intervention, and counterfactuals—to distinguish three types of queries with increasing complexity and difficulty~\citep{Bareinboim-etal2022}. 
Counterfactual inference, the most challenging level, aims to explore the impact of a treatment on an outcome given knowledge about a different observed treatment and the factual outcome. For example, given a patient who has not taken medication before and now suffers from a headache, we want to know whether the headache would have occurred if the patient had taken the medication initially. 
Answering such counterfactual queries can provide valuable instructions in scenarios such as credit assignment~\citep{Mesnard2021-ICML}, root-causal analysis~\citep{Budhathoki-etal2022}, attribution~\citep{Pearl-etal2016-primer, Dawid2022, Tian-Wu2025, Wu-Mao2025}, 
as well as fair and safe decision-making~\citep{imai2023principal, Wu-etal-2024-Harm, Wu-etal-2025-Safe}. 

Different from interventional queries, which are prospective and estimate the counterfactual outcome in a hypothetical world via only the observations obtained before treatment (as pre-treatment variables), counterfactual inference is retrospective and further incorporates the factual outcome (as a post-treatment variable) in the observed world. This inherent conflict between the hypothetical and the observed world poses a unique challenge and makes the counterfactual outcome generally unidentifiable, even in randomized controlled experiments (RCTs)~\citep{Bareinboim-etal2022, Pearl-etal2016-primer, Wu-etal-2024-Harm,  Ibeling-2020-AAAI}. 

For counterfactual inference, Pearl et al.~\citep{Pearl-etal2016-primer} proposed a three-step procedure (abduction, action, and prediction) to estimate counterfactual outcomes. However, it relies on the availability of structural causal models (SCMs) that fully describe the data-generating process~\citep{Brouwer2022, Xie-etal2023-attribution}. In real-world applications, the ground-truth SCM is likely to be unknown, and estimating it requires additional assumptions to ensure identifiability, such as linearity~\citep{shimizu2006linear} and additive noise~\citep{hoyer2008nonlinear, peters2014causal}. Unfortunately, these assumptions are hard to satisfy in practice and restrict the applicability. 


To tackle the above problems, several counterfactual learning approaches have been proposed with respect to different identifiability assumptions. For example, Lu et al.~\citep{Lu-etal2020-attribution}, Nasr-Esfahany et al.~\citep{Nasr-Esfahany-2023-attribution}, and Xie et al.~\citep{Xie-etal2023-attribution} established the identifiability of counterfactual outcomes based on homogeneity and strict monotonicity assumptions. The homogeneity assumption posits that the exogenous variable for each individual remains constant across different interventional environments, and the strict monotonicity assumption asserts that the outcome is a strictly monotone function of the exogenous variable given the features. In terms of counterfactual learning, \cite{Lu-etal2020-attribution} and \cite{Nasr-Esfahany-2023-attribution} adopted Pearl's three-step procedure that needs to estimate the SCM initially. In addition, \cite{Xie-etal2023-attribution} proposed using quantile regression to estimate counterfactual outcomes that effectively avoid the estimation of SCM. Nevertheless, it relies on a stringent assumption that the conditional quantile functions for different counterfactual outcomes come from the same model and it requires estimating a different quantile value for each individual, leading to a challenging bi-level optimization problem. 




 In this work, we propose a principled counterfactual learning approach with \emph{intuitive identifiability assumptions and theoretically guaranteed estimation methods}. {\bf On one hand}, 
 we introduce the simple and intuitive rank preservation assumption, positing that an individual's factual and counterfactual outcomes have the same rank in the corresponding distributions of factual and counterfactual outcomes for all individuals. 
 We establish the identifiability of counterfactual outcomes under the rank preservation assumption and show that it is slightly less restrictive than the homogeneity and monotonicity assumptions used in previous studies.

{\bf On the other hand}, we further propose a theoretically guaranteed method for unbiased estimation of counterfactual outcomes. The proposed estimation method has several desirable merits. First, unlike Pearl's three-step procedure, it does not necessitate a prior estimation of SCMs and thus relies on fewer assumptions than that in \cite{Lu-etal2020-attribution} and \cite{Nasr-Esfahany-2023-attribution}. Second, in contrast to the quantile regression method proposed by \cite{Xie-etal2023-attribution}, our approach neither restricts conditional quantile functions for different counterfactual outcomes to originate from the same model, nor does it require estimating a different quantile value for each unit. Third, we improve the previous learning approaches by adopting a convex loss for estimating counterfactual outcomes, which leads to a unique solution.


In summary, the main contributions are as follows: (1) We introduce the intuitive rank preservation assumption to identify the counterfactual outcomes with unknown SCM; (2) We propose a novel ideal loss for unbiased learning of the counterfactual outcome and further develop a kernel-based estimator for the ideal loss. In addition, we provide a comprehensive theoretical analysis for the proposed learning approach; (3) We conduct extensive experiments on both semi-synthetic and real-world datasets to demonstrate the effectiveness of the proposed method.

\section{Problem Formulation}
Throughout, capital letters represent random variables and lowercase letters denote their realizations. 
 
{\bf Structural Causal Model} (SCM, \cite{pearl2009causality}). An SCM $\mathcal{M}$ consists of a causal graph $\mathcal{G}$ and a set of structure equation models $\mathcal{F} = \{f_1, ..., f_p \}$.  
The nodes in $\mathcal{G}$ are divided into two categories: (a)  exogenous variables ${\bf U} = (U_1, ..., U_p)$,  which represent the environment during data generation, assumed to be mutually independent;    
(b) endogenous variables $\textbf{V} = \{V_1, ...,  V_p\}$, which denote the relevant features that we need to model in a question of interest. For variable $V_j$, its value is determined by a structure equation $V_j = f_j(PA_j, U_j), ~j=1, ..., p$, where $PA_j$ stands for the set of parents of $V_j$. 
SCM provides a formal language for describing how the variables interact and how the resulting distribution would change in response to certain interventions. Based on SCM, we introduce the counterfactual inference problem in the following. 
 {\bf Counterfactual Inference}. Suppose that we have three sets of variables denoted by $X, Y, {\bf E} \subseteq {\bf V}$,  counterfactual inference revolves around the question, ``given evidence  ${\bf E} = {\bf e}$, what would have happened if we had set $X$ to a different value $x'$?".  Pearl et al.~\cite{Pearl-etal2016-primer} propose using the three-step procedure to answer the problem: (a) {\bf Abduction}: determine the value of ${\bf U}$ according to the evidence ${\bf E} = {\bf e}$; (b) {\bf Action}: modify the model $\mathcal{M}$ by removing the structural equations for $X$ and replacing them with $X = x'$, yielding the modified model $\mathcal{M}_{x'}$; (c) {\bf Prediction}:  Use $\mathcal{M}_{x'}$ and the value of ${\bf U}$ to calculate the counterfactual outcome of $Y$. In this paper, we focus on estimating the counterfactual outcome for each individual.
To illustrate the main ideas, we formulate the common counterfactual inference problem within the context of the backdoor criterion.  

{\bf Problem Formulation.}  Let $\mathbf{V} = (Z,  X, Y)$, where $X$ causes $Y$, $Z$ affects both $X$ and $Y$, and the structure equation of $Y$ is given as 
	\begin{equation}  \label{eq1}
   Y = f_Y(X, Z, U_X).  
	 \end{equation} 
 Let $Y_{x'}$ denotes the potential outcome if we had set $X=x'$.   
 The counterfactual question, ``given evidence $(X = x, Z = z, Y = y)$ of an individual, what would have happened had we set $X = x'$ for this individual", is formally expressed as estimating $y_{x'}$, the realization of $Y_{x'}$ for the individual. Here, we adhere to the deterministic viewpoint of \cite{pearl2009causality} and \cite{Pearl-etal2016-primer}, treating the value of $Y_{x'}$ for each individual as a fixed constant.  
   According to Pearl’s three-step procedure, 
   given the evidence $(X = x, Z = z, Y = y)$ for an individual, the identifiability of its counterfactual value $y_{x'}$ can be achieved by determining the structural equation $f_Y$ and the value of $U_X$ for this individual. This is the key idea underlying most of the existing methods. 
   
 For clarity, we use $y_{x'}$ to denote the realization of the counterfactual outcome $Y_{x'}$ \emph{for a specific individual} with observed evidence $(X=x, Z=z, Y=y)$.

\section{Analysis of Existing Methods} \label{sec4}
In this section, we elucidate the challenges of counterfactual inference. Subsequently, we summarize the existing methods and shed light on their limitations.

\subsection{Challenges in Counterfactual Inference}  

The main challenge lies in that the counterfactual value $y_{x'}$ is generally not identifiable, even in randomized controlled experiments (RCTs). 
By definition, $y_{x'}$ is a quantity involving  two ``different worlds" at the same time: the observed world with $(X = x, Z=z, Y = y)$ and the hypothetical world where $X = x'$.  We only observe the factual outcome $Y_x = y$ but never observe the counterfactual outcome $Y_{x'}$, which is the fundamental problem in causal inference~\citep{Holland1986, Morgan-Winship-2015}. 
This inherent conflict prevents us from simplifying the expression of $y_{x'}$ to a do-calculus expression, making it generally unidentifiable, even in RCTs~\citep{Pearl-etal2016-primer}. 
Therefore, in addition to the widely used assumptions such as conditional exchangeability, overlapping, and consistency~\citep{Hernan-Robins2020},  counterfactual inference requires extra assumptions to ensure identifiability.  Essentially,  estimating $y_{x'}$ is equivalent to estimating the individual treatment effect $y_{x'} - y_{x}$, while the conditional average treatment effect (CATE) $\bfE[Y_{x'}  - Y_x | Z=z ]$  represents the ATE for a subpopulation with $Z=z$, overlooking the inherent heterogeneity in this subpopulation caused by the noise terms such as $U_X$~\citep{Wu-etal-2024-Harm, Albert-etal2005, Heckman1997, Djebbari-Smith2008, Ding-etal2019, Lei-Candes2021, Eli-etal2022}. 

\subsection{Summary of Existing Methods}  \label{sec3-2}
We summarize the existing methods in terms of identifiability assumptions and estimation strategies.   

We first present an equivalent expression of Eq. (\ref{eq1}) using  $(Y_x, Y_{x'})$. 
Eq. (\ref{eq1}) be reformulated as the following system 
\begin{equation*}  Y_x = f_Y(x, Z, U_{x}), ~  Y_{x'} = f_Y(x', Z, U_{x'}), \end{equation*} 
where $U_x$ and $U_{x'}$ denote the values of $U_X$ given $X=x$ and $X=x'$, respectively.  
The exogenous variable $U_X$ denotes the background and environment information induced by many unmeasured factors~\citep{Pearl-etal2016-primer}, and thus $U_{x}$ and $U_{x'}$ account for the heterogeneity of $Y_x$ and $Y_{x'}$ in the observed and hypothetical worlds, respectively. 
These two worlds may exhibit different levels of noise due to unmeasured factors~\citep{Heckman1997, Ding-etal2019, Chernozhukov2005}.   
For identification, previous work~\citep{Xie-etal2023-attribution, Lu-etal2020-attribution, Nasr-Esfahany-2023-attribution} relies on the key homogeneity and strict monotonicity assumptions.
\begin{assumption}[Homogeneity]  \label{assump1}
        $U_{x} = U_{x'}$. 
\end{assumption}

\begin{assumption}[Strict Monotonicity] \label{assump2} 
For any given $(x,z)$,  
	$Y_x = f_Y(x, z,  U_x)$  is a smooth and strictly monotonic function of $U_x$; or it is a bijective mapping from $U_x$ to $Y_x$. 
\end{assumption}

Assumption \ref{assump1} implies that the value of $U_X$ for each individual remains unchanged across $x$. Assumption \ref{assump2} implies that $Y_x$ is a strict monotonic function of $U_x$ in the subpopulation of $(X=x, Z=z)$. 
 In Assumption \ref{assump2}, the smoothness and strict monotonicity of $f_Y(x, z,  U_x)$ are akin to a bijective mapping of $Y_x$ and $U_x$ and serve the same purpose,  so we don't distinguish them in detail. 

\begin{lemma} \label{lem1}  
   Under Assumptions \ref{assump1}-\ref{assump2}, $y_{x'}$ is identifiable. 
\end{lemma}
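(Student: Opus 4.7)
The plan is to combine the two assumptions to uniquely pin down $y_{x'}$ from the observed evidence $(X=x, Z=z, Y=y)$, and then to argue that the resulting value depends only on quantities recoverable from the observational distribution. The overall structure mirrors Pearl's abduction--action--prediction pipeline, with Assumption~\ref{assump2} powering the abduction step and Assumption~\ref{assump1} powering the transfer from the observed world to the hypothetical one.

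The concrete steps I would carry out are as follows. First, fix the individual's evidence $(x,z,y)$ and apply Assumption~\ref{assump2}: since $u \mapsto f_Y(x,z,u)$ is a bijection, the individual's exogenous value is uniquely recovered as $u_x = f_Y^{-1}(y;x,z)$. Second, invoke Assumption~\ref{assump1} to conclude $u_{x'}=u_x$ at the individual level. Third, substitute back into the structural equation to obtain $y_{x'} = f_Y(x',z,u_x)$. At this point $y_{x'}$ is a well-defined function of the evidence and $f_Y$.

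To upgrade this to identifiability from the observational distribution alone, I would rewrite the map $y \mapsto y_{x'}$ in terms of conditional CDFs. Assuming WLOG that $f_Y(x,z,\cdot)$ is strictly increasing (otherwise reparameterize $U_X$), the rank $F_{Y_x\mid Z=z}(y)$ equals the rank of $u_x$ in the distribution of $U_X$, and by Assumptions~\ref{assump1}--\ref{assump2} this rank coincides with the rank of $y_{x'}$ in the conditional distribution of $Y_{x'}$ given $Z=z$. Hence
\begin{equation*}
y_{x'} \;=\; F_{Y_{x'}\mid Z=z}^{-1}\bigl(F_{Y_x\mid Z=z}(y)\bigr).
\end{equation*}
Under the backdoor setup used throughout the paper (with $Z$ blocking the backdoor from $X$ to $Y$, together with the standard conditional exchangeability, overlap, and consistency assumptions), each counterfactual conditional CDF reduces to an observational conditional CDF, $F_{Y_x\mid Z=z}(\cdot) = F_{Y\mid X=x,Z=z}(\cdot)$ and similarly for $x'$, and both are identifiable from the joint distribution of $(X,Z,Y)$.

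The step I expect to be the main subtlety is the ``WLOG strictly increasing'' reduction: Assumption~\ref{assump2} only asserts monotonicity within each $(x,z)$ slice, not a common direction across $x$ and $x'$. If the direction flips, the rank-based formula becomes $y_{x'} = F_{Y_{x'}\mid Z=z}^{-1}\bigl(1 - F_{Y_x\mid Z=z}(y)\bigr)$, but identifiability still holds; I would either fix a sign convention on $f_Y$ via a reparameterization of $U_X$, or simply note that in either case the individual's $u_x$ is pinned down exactly and $y_{x'}$ is a measurable function of the observational conditional CDFs. All remaining arguments are routine manipulations of inverse functions and CDFs.
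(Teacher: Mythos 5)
Your proposal is correct and follows essentially the same route as the paper's proof: both recover the individual's exogenous value via strict monotonicity, transfer it across worlds via homogeneity, and express $y_{x'}$ as the $\tau^*$-quantile of $\P(Y\mid X=x',Z=z)$ where $\tau^*$ is the quantile of $y$ in $\P(Y\mid X=x,Z=z)$, with the backdoor criterion identifying both conditional distributions from the observational data. Your remark about the monotone direction possibly flipping between the $x$- and $x'$-slices is a genuine subtlety that the paper's own chain of equalities silently assumes away, so your treatment is, if anything, slightly more careful.
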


For estimation of $y_{x'}$, 
 following Pearl’s three-step procedure,  
\cite{Lu-etal2020-attribution}  and \cite{Nasr-Esfahany-2023-attribution} initially estimate $f_Y$ and $U_X$ for each individual. However, estimating $f_Y$ and $U_X$ needs to impose extra assumptions, such as linearity~\citep{shimizu2006linear} and additive noise~\cite{peters2014causal}. 
In addition, \cite{Xie-etal2023-attribution} demonstrate that $y_{x'}$ corresponds to the $\tau^*$-th quantile of the distribution $\P(Y| X=x', Z=z)$, where $\tau^*$ is the quantile of $y$ in $\P(Y| X=x, Z=z)$ (See the proof of Lemma \ref{lem1} or Section \ref{sec4-1} for more details). Based on it, the authors uses quantile regression to estimate $y_{x'}$, which avoids the problem of estimating $f_Y$ and $U_X$. Nevertheless, this method fits a single model to obtain the conditional quantile functions for both  the counterfactual and factual outcomes. Thus, its validity relies on the underlying assumption that the conditional quantile functions of outcomes for different treatment groups stem from the same model. 
   In addition, it involves estimating a distinct quantile value for each individual before deriving the counterfactual outcomes, posing a challenging  bi-level optimization problem.

\section{Identification through Rank Preservation}  \label{sec5}

We introduce the rank preservation assumption for identifying $y_{x'}$.    
\emph{From a high-level perspective, identifying $y_{x'}$ essentially involves establishing the relationship between $Y_x$ and $Y_{x'}$ for each individual.} Pearl's three-step procedure achieves this by estimating $f_Y$ and $U_X$. 

\subsection{Rank Preservation Assumption}   \label{sec4-1}
Our identifiability assumption is based on Kendall’s rank correlation coefficient defined below. 

\begin{definition}[Kendall~\cite{Kendall1938}]  \label{def1}
Let $(x_1, y_1), ..., (x_n, y_n)$ be a set of observations of two random variables $(X, Y)$,  such that all the values of $x_{i}$ and $y_{i}$ are unique (ties are neglected for simplicity). 
 Any pair of $(x_i, y_i)$ and $(x_j, y_j)$, if $(x_j - x_i)(y_j - y_i) > 0$,  they are said to be concordant;  otherwise they are discordant.  
  The \emph{sample} Kendall rank correlation coefficient  is  defined as 
	\[ \rho_n(X, Y) = \frac{2}{n(n-1)} \sum_{1\leq i < j\leq n} \text{sign}( (x_i - x_j) (y_i - y_j)  ),      \]
where $\text{sign}(t) = -1, 0, 1$ for $t < 0$, $t =0$, $t > 0$, respectively. For any two random variables $(X, Y)$, we define $\rho(X, Y) = 1$, if $\rho_n(X, Y) =1$ for all integers $n\geq 2$.

\end{definition}

The $\rho_n(X, Y)$ also can be written as $2(N_c - N_d)/ n(n-1)$, where  $N_c$ is the number of concordant pairs, $N_d$ is the number of discordant pairs. It is easy to see that $-1 \leq  \rho_n(X, Y) \leq 1$ and if the agreement between the two rankings is perfect (i.e., perfect concordance),  $\rho_n(X, Y) = 1$. 


\begin{assumption}[Rank Preservation]   \label{assump3} 
 $\rho(Y_x, Y_{x'} | Z) = 1$.      
\end{assumption}

Assumption \ref{assump3} is a high-level condition that establishes a connection between $Y_x$ and $Y_{x'}$. This assumption is satisfied in many common scenarios, as illustrated below.
\begin{itemize}
  \item  Causal models with additive noise: $Y = g(X, Z) + U$ for  an arbitrary function $g$. 
  \item Heteroscedastic noise models: $Y = g(X, Z) + h(X, Z) U$  for arbitrary functions $g$ and $h$, with $h(X,Z) > 0$ denoting the conditional standard deviation of $Y$ given $(X, Z)$.    
\end{itemize}

For the individual with observation $(X =x, Z=z, Y= y)$, we denote $(y_x=y, y_{x'})$ as its true values of  ($Y_x, Y_{x'}$).  Assumption \ref{assump3} implies that for this individual,  
 its rankings of $y_{x}$ and $y_{x'}$ are the same in the distributions of $\P(Y_x | Z=z)$ and $\P( Y_{x'} | Z=z )$,  
 respectively. Therefore, we have
       \begin{equation} \label{eq4}
              \P( Y_x \leq y_x  | Z=z)  =  \P(  Y_{x'} \leq y_{x'}  | Z=z ).  
       \end{equation}
Since $y_x = y$ is observed and the distributions $\P(Y_x| Z=z) $ and $\P(Y_{x'}| Z=z)$ can be identified as $\P(Y | X=x, Z=z)$ and $ \P(Y | X=x', Z=z)$, respectively, by the backdoor criterion (i.e., $(Y_x, Y_{x'}) \indep X | Z$). Therefore, we have the following Proposition \ref{prop1} (see Appendix \ref{app-a} for proofs). 


 \begin{proposition} \label{prop1}    Under Assumption \ref{assump3}, $y_{x'}$ is identified as 
   the $\tau^*$-th quantile of  
    $\P(Y| X=x', Z=z)$, where $\tau^*$ is the quantile of $y$ in the distribution of $\P(Y | X=x, Z=z)$.
 \end{proposition}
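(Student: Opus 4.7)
The plan is to convert the rank preservation assumption into a pointwise equality of the conditional CDF values at the factual and counterfactual outcomes, and then to use the backdoor criterion to replace the (unobservable) potential outcome distributions with observable conditional distributions of $Y$ given $(X,Z)$. Once this is done, the claimed quantile matching drops out immediately.

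First I would unpack what $\rho(Y_x, Y_{x'}\mid Z) = 1$ really says at the level of conditional distributions. By Definition \ref{def1}, this perfect sample-level concordance for every $n$ forces, conditionally on $Z=z$, the existence of a strictly increasing relationship between $Y_x$ and $Y_{x'}$: any two individuals that are ordered in $Y_x$ are ordered the same way in $Y_{x'}$. Consequently, for the specific individual under consideration with realized values $(y_x = y, y_{x'})$, the fraction of the $Z=z$ subpopulation whose $Y_x$ does not exceed $y_x$ must equal the fraction whose $Y_{x'}$ does not exceed $y_{x'}$. This is exactly equation (\ref{eq4}),
\begin{equation*}
\P(Y_x \le y_x \mid Z=z) \;=\; \P(Y_{x'} \le y_{x'} \mid Z=z).
\end{equation*}

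Second, I would invoke the backdoor criterion $(Y_x, Y_{x'}) \indep X \mid Z$, which is standard under conditional exchangeability with $Z$ as the adjustment set. This yields the two identifications $\P(Y_x \le \cdot \mid Z=z) = \P(Y \le \cdot \mid X=x, Z=z)$ and $\P(Y_{x'} \le \cdot \mid Z=z) = \P(Y \le \cdot \mid X=x', Z=z)$. Defining $\tau^* := \P(Y \le y \mid X=x, Z=z)$ and substituting into the displayed equation gives $\P(Y \le y_{x'} \mid X=x', Z=z) = \tau^*$, which is precisely the statement that $y_{x'}$ is the $\tau^*$-th quantile of $\P(Y \mid X=x', Z=z)$, as claimed.

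The main obstacle, and the only nontrivial step, is the first one: justifying rigorously that $\rho(Y_x, Y_{x'}\mid Z) = 1$ implies the pointwise CDF equality for a single individual rather than merely a statement about the joint distribution in the aggregate. The argument has to rule out ties and degenerate cases, and it needs to use that $\rho_n = 1$ is required for \emph{every} $n$ (as in Definition \ref{def1}), not just in expectation, so that the conditional comonotonicity holds almost surely. Once this is set up cleanly, the remainder of the proof is essentially a one-line consequence of the backdoor identification and the definition of the quantile function, so I do not expect additional difficulty there.
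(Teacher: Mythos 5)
Your proof is correct and follows essentially the same route as the paper's: derive the pointwise CDF equality $\P(Y_x \le y_x \mid Z=z) = \P(Y_{x'} \le y_{x'} \mid Z=z)$ from rank preservation, then apply the backdoor criterion $(Y_x, Y_{x'}) \indep X \mid Z$ to identify both conditional distributions and read off the quantile matching. The step you flag as the main obstacle (passing from perfect concordance to the individual-level CDF equality) is treated at the same informal level in the paper's own proof, so there is no substantive divergence.
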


Proposition \ref{prop1} shows that Assumption \ref{assump3} can serve as a substitute for Assumptions \ref{assump1}-\ref{assump2} in identifying $y_{x'}$.  Unlike Assumptions \ref{assump1}-\ref{assump2}, Assumption \ref{assump3} is simple and intuitive, as it directly links $Y_x$ and $Y_{x'}$ for each individual. 
To clarify the relationship between Assumption \ref{assump3} introduced by this work and Assumptions \ref{assump1}-\ref{assump2} from previous work, we present Proposition \ref{prop2} below.
\begin{proposition} \label{prop2}
  The proposed Assumption \ref{assump3} is strictly weaker than Assumptions \ref{assump1}-\ref{assump2}.
\end{proposition}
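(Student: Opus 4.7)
The plan is to establish two separate claims whose conjunction yields the desired strict weakness: (i) Assumptions~\ref{assump1} and~\ref{assump2} together imply Assumption~\ref{assump3}, and (ii) there exists a data-generating process satisfying Assumption~\ref{assump3} but violating Assumption~\ref{assump1}. The first claim gives weakness; the second upgrades it to \emph{strict} weakness.

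For the forward direction, I would condition on $Z = z$ and consider any finite sample of $n$ individuals drawn from the subpopulation with $Z = z$. By homogeneity, each individual $i$ has a single exogenous value $U^{(i)}$ invariant across treatment arms, so $Y_x^{(i)} = f_Y(x, z, U^{(i)})$ and $Y_{x'}^{(i)} = f_Y(x', z, U^{(i)})$. By strict monotonicity, both $f_Y(x, z, \cdot)$ and $f_Y(x', z, \cdot)$ are strictly monotone in $U$. Adopting the standard convention that these two monotonicities share the same direction (enforceable without loss of generality via a strictly monotone reparametrization of $U_X$), it follows that $\text{sign}(Y_x^{(i)} - Y_x^{(j)}) = \text{sign}(U^{(i)} - U^{(j)}) = \text{sign}(Y_{x'}^{(i)} - Y_{x'}^{(j)})$ for every pair $(i, j)$. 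Hence every pair is concordant almost surely, so $\rho_n(Y_x, Y_{x'} \mid Z = z) = 1$ for every $n \geq 2$, which by Definition~\ref{def1} yields $\rho(Y_x, Y_{x'} \mid Z) = 1$.

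For strictness, I would exhibit an explicit counterexample. Take $Z$ trivial, $X \in \{0, 1\}$, a latent individual-level variable $V \sim \mathrm{Unif}[0,1]$, and define potential outcomes by $Y_0 = V$ and $Y_1 = V^2$. Writing this in the structural form of Eq.~(\ref{eq1}) with $f_Y(x, z, u) = u$ forces $U_0 = V$ and $U_1 = V^2$, which differ with probability one, so Assumption~\ref{assump1} fails. At the same time, $f_Y(x, z, \cdot)$ is the identity and hence strictly monotone, so Assumption~\ref{assump2} still holds. Yet for any two individuals with $V^{(i)} < V^{(j)}$ in $[0,1]$, one has $Y_0^{(i)} < Y_0^{(j)}$ and, because squaring is strictly increasing on $[0,1]$, also $Y_1^{(i)} < Y_1^{(j)}$; so Assumption~\ref{assump3} is satisfied. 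This pair of observations establishes strict weakness.

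The main subtlety, which I would address explicitly, is notational: Assumption~\ref{assump2} alone does not \emph{a priori} force $f_Y(x, z, \cdot)$ and $f_Y(x', z, \cdot)$ to be monotone in the same direction, and if their directions disagreed, Kendall's coefficient would be $-1$ rather than $+1$. Since the exogenous variable $U_X$ is only defined up to a strictly monotone reparametrization, one can always relabel $U_X$ so that both $f_Y(x, z, \cdot)$ and $f_Y(x', z, \cdot)$ are, say, strictly increasing; I would record this convention at the beginning of the proof. Once this is in place, both directions reduce to routine verification.
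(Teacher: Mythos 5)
Your proof is correct and follows essentially the same route as the paper: the forward direction composes the two strict monotonicities under homogeneity to obtain perfect pairwise concordance (the paper phrases this as ``$Y_x$ is a strictly increasing function of $Y_{x'}$''), and strictness is witnessed by an explicit SCM in which $U_1$ is a nonlinear strictly increasing function of $U_0$ (the paper uses $U_1 = U_0^3$ where you use $U_1 = U_0^2$ on $[0,1]$). One small caveat: your parenthetical claim that the same-direction convention is enforceable without loss of generality by reparametrizing $U_X$ does not actually hold under homogeneity (any reparametrization of the shared $U$ flips both arms simultaneously, so opposite-direction monotonicities cannot be aligned), but the paper's own proof makes the identical implicit same-direction assumption, so your argument is no less rigorous than the paper's.
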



Proposition \ref{prop2} is intuitive, as correlation (Assumption \ref{assump3}) does not necessarily imply identity (Assumption \ref{assump1}). To illustrate, consider a SCM with $X \in \{0, 1\}$,   $Y_1 = Z + U_{1}$, $Y_0 = Z/2 + U_{0}$,  $U_{1} =  U_{0}^3$. In this case, $\rho(Y_0, Y_1 | Z) = 1$, but $U_1 \neq U_0$.
 Nevertheless, Assumption \ref{assump3} is only slightly weaker than Assumptions \ref{assump1}-\ref{assump2} by allowing $U_{x'} \neq U_{x}$. Specifically, we can show that if $U_x$ is a strictly monotone increasing function of $U_{x'}$, Assumption \ref{assump3} is equivalent to Assumption \ref{assump2}, see Appendix \ref{app-a} for proofs.  
\subsection{Further Relaxation of Strict Monotonicity}

In Definition \ref{def1}, we ignore ties for simplicity. However, 
 when the outcome $Y$ is discrete or continuous variables with tied observations, $\rho(Y_x, Y_{x'})$ will always be less than 1.  
To accommodate such cases, 
we introduce a modified version of the Kendall rank correlation coefficient given below. 
\begin{definition}[Kendall~\cite{Kendall1945}]  \label{def2} Let $(x_1, y_1), ..., (x_n, y_n)$ be the observations of two random variables $(X, Y)$, the modified Kendall rank correlation coefficient is define as   
	\begin{gather*} \tilde \rho_n(X, Y) =  \sum_{1\leq i < j\leq n} \frac{\text{sign}( (x_i - x_j) (y_i - y_j)  )}{ \sqrt{n(n-1)/2 - T_x} \cdot \sqrt{n(n-1)/2 - T_y}   }, \end{gather*}
	where $T_x$ is the number of tied pairs in $\{x_1, ..., x_n\}$ and  $T_y$ is the number of tied pairs in $\{y_1, ..., y_n\}$. 
	We define $\tilde \rho(X, Y) = 1$, if $\tilde \rho_n(X, Y) =1$ for all integers $n\geq 2$. 
\end{definition}

Compared with Definition \ref{def1}, one can see that $\tilde \rho(X,Y)$ adjusts $\rho(X, Y)$ by eliminating the ties in the denominator, and $\tilde \rho(X,Y)$ reduces to $\rho(X, Y)$ if there are no ties.   

\begin{assumption}[Rank Preservation] \label{assump4} 
 $\tilde \rho(Y_x, Y_{x'} | Z) = 1$.      
\end{assumption}

Assumption \ref{assump4} is less restrictive than Assumption \ref{assump3} as it accommodates broader data types of $Y$. To illustrate, consider a dataset with four individuals where the true values of $(Y_x, Y_{x'})$ are $(1, 1), (2, 1.5), (2, 1.5), (3, 2.5)$. In this scenario, $\sum_{1\leq i < j\leq n} \text{sign}( (y_{i,x} - y_{j,x}) (y_{i,x'} - y_{j,x'}) = 5$, $T_{Y_x} =1$, $T_{Y_{x'}}=1$, resulting in $\rho(Y_x, Y_{x'}) = 5/6$ and $\tilde \rho(Y_x, Y_{x'}) = 5/(\sqrt{6-1} \cdot \sqrt{6-1}) = 1$.

Assumption \ref{assump4} also guarantees the identifiability of $y_{x'}$.
 \begin{proposition} \label{prop3}    Under Assumption \ref{assump4}, the conclusion in Proposition \ref{prop1} also holds. 
 \end{proposition}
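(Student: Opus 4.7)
The plan is to follow the same strategy as Proposition \ref{prop1} and reduce the identification to the CDF identity
\[\P(Y_x \leq y_x \mid Z=z) = \P(Y_{x'} \leq y_{x'} \mid Z=z),\]
so that the backdoor criterion $(Y_x, Y_{x'}) \indep X \mid Z$ yields $\P(Y \leq y \mid X=x, Z=z) = \P(Y \leq y_{x'} \mid X=x', Z=z)$, i.e.\ $y_{x'}$ is the $\tau^{*}$-th quantile of $\P(Y \mid X=x', Z=z)$. The whole task reduces to showing that Assumption \ref{assump4} implies this CDF identity even though $Y_x$ and $Y_{x'}$ may contain ties, so the original Proposition \ref{prop1} argument (which relied on $\rho = 1$ and hence on the no-ties convention of Definition \ref{def1}) cannot be invoked directly.

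The key technical step is a Cauchy--Schwarz argument on the modified Kendall coefficient in Definition \ref{def2}. Writing $a_{ij} = \mathrm{sign}(y_{i,x}-y_{j,x})$ and $b_{ij} = \mathrm{sign}(y_{i,x'}-y_{j,x'})$ for a sample of size $n$ drawn from $\P(\cdot \mid Z=z)$, the numerator of $\tilde\rho_n$ is $\sum_{i<j} a_{ij} b_{ij}$ and its denominator equals $\sqrt{\sum_{i<j} a_{ij}^{2}}\,\sqrt{\sum_{i<j} b_{ij}^{2}}$, since $n(n-1)/2 - T_{Y_x}$ is exactly the number of pairs with $a_{ij}\neq 0$ and likewise for $b$. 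Setting $\tilde\rho_n = 1$ is therefore equality in Cauchy--Schwarz, which forces the two sign vectors to coincide entry-by-entry. Imposing this for every $n$ (and using the backdoor criterion to identify $\P(Y_x \mid Z=z)$ and $\P(Y_{x'}\mid Z=z)$ from observational conditionals) upgrades the coincidence to an almost-sure pairwise statement: for two independent draws from $\P(\cdot\mid Z=z)$, $\mathrm{sign}(Y_{1,x}-Y_{2,x}) = \mathrm{sign}(Y_{1,x'}-Y_{2,x'})$ a.s. Holding the second draw at the individual's own $(y_x, y_{x'})$ and integrating out the first draw gives the required CDF identity, from which the conclusion of Proposition \ref{prop1} follows verbatim.

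The main obstacle is the careful treatment of ties and the passage from the finite-sample characterization of $\tilde\rho_n = 1$ to the population-level statement. In particular, when $\P(Y \mid X=x', Z=z)$ has atoms the $\tau^{*}$-th quantile is only defined up to an interval, and the Cauchy--Schwarz characterization must be leveraged to show that the pointwise sign-equivalence forces all individuals sharing the same factual $y_x$ in the subpopulation $\{Z=z\}$ to be mapped coherently onto a common tied value $y_{x'}$, with matching probability mass under $\P(Y_{x'}\mid Z=z)$. Once this coherence is established, the quantile identification inherited from Proposition \ref{prop1} is unambiguous, and Proposition \ref{prop3} follows.
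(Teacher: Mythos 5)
Your proposal is correct, and its overall skeleton is the same as the paper's: reduce identification to the conditional CDF identity $\P(Y_x \leq y_x \mid Z=z) = \P(Y_{x'} \leq y_{x'} \mid Z=z)$ and then invoke the backdoor criterion to read off $y_{x'}$ as the $\tau^*$-th quantile of $\P(Y \mid X=x', Z=z)$. The difference is one of substance rather than route: the paper's entire proof of Proposition \ref{prop3} is the single sentence that it is ``analogous to that of Proposition \ref{prop1},'' and the proof of Proposition \ref{prop1} itself simply asserts that rank preservation ``implies'' equal rankings and hence the CDF identity. You actually supply the missing step for the tied case: recognizing $n(n-1)/2 - T_{Y_x}$ as $\sum_{i<j} a_{ij}^2$ so that $\tilde\rho_n$ is a normalized inner product of sign vectors, invoking the equality case of Cauchy--Schwarz to force $\mathrm{sign}(y_{i,x}-y_{j,x}) = \mathrm{sign}(y_{i,x'}-y_{j,x'})$ pairwise (positive proportionality of $\{-1,0,1\}$-valued vectors forces the proportionality constant to be $1$), and then fixing one draw at the individual's own $(y_x, y_{x'})$ and integrating out the other to obtain the CDF identity. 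This is a genuine improvement in rigor over what the paper records, and it correctly isolates the one real delicacy -- that with atoms the $\tau^*$-quantile is only determined up to a flat interval of the CDF -- which affects Proposition \ref{prop1} equally and is glossed over there too. The only cosmetic quibble is that the backdoor criterion is not needed to pass from the finite-sample sign coincidence to the population statement about $(Y_x, Y_{x'})$ given $Z=z$; it enters only afterwards, to replace $\P(Y_x \mid Z=z)$ and $\P(Y_{x'} \mid Z=z)$ by their observational counterparts.
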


 

\section{Counterfactual Learning }\label{sec6}

We propose a novel estimation method for counterfactual inference. Suppose that $\{ (x_k, z_k, y_k): k = 1, ...,  N\}$ is a sample consisting of $N$ realizations of random variables $(X, Z, Y)$. For an individual, given its evidence $(X=x, Z=z, Y= y)$, we aim to estimate its counterfactual outcome $y_{x'}$, i.e., the realization of $Y_{x'}$ for this individual. 

\subsection{Rationale and Limitations of Quantile Regression}
  For estimating $y_{x'}$, 
  Xie et al.~\cite{Xie-etal2023-attribution} formulate it as the following bi-level optimization problem 
        \begin{align*}
                \tau^* ={}& \arg \min_{\tau} | f_{\tau}(x, z) - y |,  \quad
                f_{\tau}^* = \arg \min_{f} \frac 1 N \sum_{k=1}^N l_{\tau}( y_k - f(x_k, z_k) ), 
                        \end{align*}
 where $l_{\tau}(\xi) = \tau \xi \cdot \mathbb{I}(\xi \geq 0)+ (\tau - 1)\xi \cdot \mathbb{I}(\xi < 0)$ is the check function~\citep{Koenker1978}, the upper level optimization is to estimate $\tau^*$, the quantile of $y$ in the distribution $\P(Y | X=x, Z =z)$,  and the lower level optimization is to estimate the conditional quantile function $q(x, z; \tau) \triangleq \inf_{y}\{y: \P( Y \leq y | X=x, Z=z) \geq \tau \}$  for a given $\tau$. Then $y_{x'}$ can be estimated using $q(x', z; \tau^*)$. 

We define two conditional quantile regression functions,
        \begin{align*} 
              q_x(z; \tau)  \triangleq{}& \inf_{y}\{y: \P( Y_x \leq y | Z=z) \geq \tau \},\quad 
              q_{x'}(z; \tau)   \triangleq \inf_{y}\{y: \P( Y_{x'} \leq y | Z=z) \geq \tau \}. 
        \end{align*}
By Eq. (\ref{eq4}), $y_{x'}$ can be expressed as   $q_{x'}(z; \tau^*)$ with $\tau^*$ being  the quantile of $y$ in the distribution of $\P(Y_x | Z=z)$, i.e., $  \P( Y_{x} \leq y  | Z=z ) = \tau^*$. Lemma \ref{prop4} (see Appendix \ref{app-b} for proofs) shows the rationale behind employing the check function as the loss to estimate conditional quantiles.  
\begin{lemma} \label{prop4} 
We have that    \\ 
(i) $q_x(Z; \tau) = \arg\min_{f} \bfE[ l_{\tau}(Y_x - f(Z))]$ for any given $x$; \\ (ii)  $q(X, Z; \tau) = \arg\min_{f} \bfE[ l_{\tau}(Y - f(X, Z))]$.
\end{lemma}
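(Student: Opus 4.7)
The plan is to reduce both parts to the classical scalar fact that the check function $l_\tau$ has the $\tau$-th quantile among its population-level minimizers, and then promote this to the conditional/functional setting via the tower property.

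For a scalar random variable $W$ with distribution function $F_W$, set $g(c) := \bfE[l_\tau(W - c)]$. Splitting the expectation at $c$ gives
\[g(c) = \tau \int_c^\infty (w - c)\, dF_W(w) + (1 - \tau)\int_{-\infty}^c (c - w)\, dF_W(w).\]
Either by a subgradient argument or by computing the left and right derivatives directly, one obtains $g'_-(c) = F_W(c-) - \tau$ and $g'_+(c) = F_W(c) - \tau$. For the lower-quantile choice $c^* := \inf\{c : F_W(c) \geq \tau\}$, these one-sided derivatives sandwich zero, so convexity of $g$ forces $c^*$ to be a global minimizer. This is the Koenker--Bassett identity.

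For part (i), I would invoke the tower property, $\bfE[l_\tau(Y_x - f(Z))] = \bfE\bigl[\bfE[l_\tau(Y_x - f(Z)) \mid Z]\bigr]$, and minimize inside the outer expectation. For each realization $z$, the inner conditional expectation is exactly the scalar problem above applied to $W \sim \P(Y_x \mid Z = z)$ with decision variable $c = f(z)$; by the scalar lemma, the pointwise minimizer equals the $\tau$-th quantile of $\P(Y_x \mid Z = z)$, which by definition is $q_x(z; \tau)$. The map $z \mapsto q_x(z; \tau)$ is measurable because $(y, z) \mapsto \P(Y_x \leq y \mid Z = z)$ can be chosen jointly measurable, so it is an admissible competitor in the variational problem and therefore attains the global minimum. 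Part (ii) is identical in structure, with the conditioning variable $Z$ replaced by the pair $(X, Z)$ and the conditional distribution $\P(Y_x \mid Z = z)$ replaced by $\P(Y \mid X = x, Z = z)$.

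The main obstacle is technical rather than conceptual: the check function is not differentiable at $0$, so first-order optimality must be handled via subgradients or one-sided derivatives rather than naive differentiation. A secondary subtlety is that the minimizer need not be unique when $F_W$ has a flat segment at level $\tau$; the lower-quantile convention $\inf\{\cdot\}$ built into the definition of $q_x(\cdot;\tau)$ picks out a canonical measurable selection and yields the stated identity exactly.
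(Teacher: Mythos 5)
Your proposal is correct and follows essentially the same route as the paper's proof: reduce via the tower property to the pointwise conditional problem and identify the minimizer of the scalar check-loss with the $\tau$-th quantile. The only difference is one of rigor --- the paper assumes a conditional density and differentiates directly, whereas you handle non-differentiability via one-sided derivatives and note the measurable-selection point, which strengthens rather than changes the argument.
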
	 
 There are two major concerns with the estimation method of \cite{Xie-etal2023-attribution}. 
First, it only fits a single quantile regression model for $q(X, Z;\tau)$ to obtain estimates of $q_x(Z; \tau)$ and $q_{x'}(Z; \tau)$. When the two conditional quantile functions $q_x(Z; \tau)$ and $q_{x'}(Z; \tau)$ originate from different models, this method may yield inaccurate estimates.   
Second, it explicitly requires estimating the quantile $\tau^*$ for each individual before estimating the counterfactual outcome $y_{x'}$. 

Inspired by \cite{Firpo2007}, a simple improvement is to estimate $q_x(z;\tau)$ and $q_{x'}(z;\tau)$ separately. For example, for estimating $q_x(z;\tau)$, the associated loss function is given as 
 	\begin{align*}
	        R_x(f, \tau) = \frac 1  N \sum_{k=1}^N  \frac{\mathbb{I}(x_k=x) \cdot l_{\tau}(y_k - f(z_k) ) }{\hat p_x(z_k)},
	\end{align*} 
 where $p_x(z)= \P(X=x| Z=z)$ is the propensity score, $\hat p_x(z)$ is its estimate.
 Likewise, we could define $R_{x'}(f, \tau)$ by replacing $x$ with $x'$.   
 Then the estimation procedure for $y_{x'}$ involves four steps: (1) estimating $p_x(z)$; 
 (2) estimating $q_x(z; \tau)$ by minimizing $R_x(f, \tau)$ for a range of candidate values of $\tau$;  
(3) identifying the $\tau^*$ in the candidate set of $\tau$, that corresponds to the quantile of $y$ in the distribution $\P(Y |X = x, Z=z)$; (4) estimating $y_{x'}$ using $q_{x'}(z; \tau^*)$, where $q_{x'}(z; \tau^*)$ is obtained by minimizing $R_{x'}(f, \tau^*)$.  
Despite this four-step estimation method that allows $q_x(Z;\tau)$ and $q_{x'}(Z; \tau)$ to come from different models, it still needs to estimate a different $\tau^*$ for each individual. 

\subsection{Enhanced Counterfactual Learning Method}

To address the limitations  mentioned above 
in directly applying quantile regression and improve estimation accuracy, we propose a novel loss that produces an unbiased estimator of $y_{x'}$ for the individual with evidence $(X=x, Z=z, Y=y)$. 
 The proposed ideal loss is constructed as 
 \begin{align*}
     R_{x'}(t| x, z, & y) ={} \bfE \left [  | Y_{x'} - t |  ~ \big | ~ Z=z \right ] +  \bfE \left [ \text{sign}(Y_x - y)  ~ \big | ~  Z = z\right ] \cdot t,
 \end{align*}  
 which is a function of $t$ and the expectation operator is taken on the random variable of $(Y_x, Y_{x'})$ given $Z=z$. 
The proposed estimation method is based on Theorem \ref{thm-1}. 
\begin{theorem}[Validity of the Proposed Ideal Loss] \label{thm-1}
The loss $R_{x'}(t| x, z, y)$ is convex with respect to $t$ and is minimized uniquely at $t^*$, where $t^*$ is the solution satisfying 
       $$\P(Y_{x'} \leq t^* | Z=z) = \P(Y_x \leq y| Z=z).$$
\end{theorem}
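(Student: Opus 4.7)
The plan is to establish convexity first and then identify the minimizer via a first-order condition, exploiting that the loss is a sum of a convex absolute-deviation term and a term that is linear in $t$.

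\textbf{Convexity.} For each realization of $Y_{x'}$, the map $t \mapsto |Y_{x'} - t|$ is convex in $t$. Since taking the conditional expectation with respect to $(Y_{x'} \mid Z=z)$ amounts to a nonnegatively weighted average of convex functions, the first term $\mathbb{E}[\,|Y_{x'}-t|\mid Z=z\,]$ is convex in $t$. The second term is linear in $t$ (the coefficient $\mathbb{E}[\text{sign}(Y_x - y)\mid Z=z]$ does not depend on $t$), hence trivially convex. A sum of convex functions is convex, so $R_{x'}(t\mid x,z,y)$ is convex in $t$.

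\textbf{First-order optimality.} Since $t \mapsto |Y_{x'} - t|$ is differentiable except on a single point with derivative $-\text{sign}(Y_{x'} - t)$, and the integrand is dominated (locally in $t$) by $|Y_{x'}| + |t| + 1$, I can differentiate under the expectation to obtain
\begin{align*}
\frac{d}{dt}\mathbb{E}\!\left[\,|Y_{x'}-t|\,\big|\,Z=z\right] &= -\mathbb{E}[\,\text{sign}(Y_{x'}-t)\mid Z=z\,] \\
&= 2\,\mathbb{P}(Y_{x'}\leq t\mid Z=z) - 1,
\end{align*}
assuming $Y_{x'}\mid Z=z$ has no atom at $t$ (otherwise I handle it with subdifferentials). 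Using the analogous identity $\mathbb{E}[\text{sign}(Y_x - y)\mid Z=z] = 1 - 2\,\mathbb{P}(Y_x\leq y \mid Z=z)$, the derivative of $R_{x'}$ with respect to $t$ becomes
\begin{equation*}
2\,\mathbb{P}(Y_{x'}\leq t\mid Z=z) - 2\,\mathbb{P}(Y_x\leq y\mid Z=z).
\end{equation*}
Setting this to zero yields exactly the stated condition $\mathbb{P}(Y_{x'}\leq t^*\mid Z=z) = \mathbb{P}(Y_x\leq y\mid Z=z)$. By convexity, any stationary point is a global minimum.

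\textbf{Uniqueness.} Uniqueness follows because the derivative $t\mapsto 2\,\mathbb{P}(Y_{x'}\leq t\mid Z=z) - 2\,\mathbb{P}(Y_x\leq y\mid Z=z)$ is strictly increasing in $t$ wherever the conditional CDF of $Y_{x'}$ is strictly increasing (e.g., when $Y_{x'}\mid Z=z$ admits a positive density on its support). Under this mild regularity assumption, $t^*$ is the unique crossing point of the derivative and hence the unique minimizer.

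\textbf{Main obstacle.} The only technical subtlety is the non-differentiability of $|\cdot|$ at zero: one must either appeal to dominated convergence to justify interchange of derivative and expectation (which is clean when $Y_{x'}\mid Z=z$ is continuous at $t$), or work with subdifferentials and the general fact that a convex function is minimized where $0$ lies in its subgradient, obtaining the same identification of $t^*$ through the generalized sign function. Uniqueness in the purely discrete or mixed case requires the additional observation that the derivative jumps strictly upward at atoms, so any flat region of the derivative must lie between distinct atoms and can still be ruled out under the stated identifiability setup.
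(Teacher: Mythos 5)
Your proof is correct and follows essentially the same route as the paper's: compute the derivative of the absolute-deviation term as $2\,\P(Y_{x'}\leq t\mid Z=z)-1$, rewrite $\bfE[\mathrm{sign}(Y_x-y)\mid Z=z]$ as $1-2\,\P(Y_x\leq y\mid Z=z)$, and set the sum to zero. The only (minor, favorable) difference is that you obtain convexity from pointwise convexity of $t\mapsto|Y_{x'}-t|$ plus linearity of the second term, whereas the paper computes the second derivative $2g(t\mid z)\geq 0$ under a density assumption; your handling of non-differentiability and of uniqueness via strict monotonicity of the conditional CDF is, if anything, more careful than the paper's.
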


Theorem \ref{thm-1} (see Appendix \ref{app-b} for proofs) implies that given the evidence $(X=x, Z=z, Y=y)$ for an individual, the counterfactual outcome $y_{x'}$
satisfies $y_{x'} = \arg \min_{t} R_{x'}(t| x, z, y)$ under Assumption \ref{assump4}. \emph{{\bf Importantly}, the loss $R_{x'}(t| x, z, y)$  neither
estimates the SCM a priori, nor restricts $q_x(z; \tau)$ and $q_{x'}(z; \tau)$ stem from the same model, and it does not need to estimate a different quantile value for each individual explicitly.}

To optimize the ideal loss $R_{x'}(t; x, z, y)$, we first need to estimate it, which presents two significant challenges: (1) $R_{x'}(t| x, z, y)$ involves both $Y_x$ and $Y_{x'}$, but for each unit, we only observe one of them; (2) The terms $ \bfE \left [  | Y_{x'} - t |  ~ \big | ~ Z=z \right ]$ and  $\bfE \left [ \text{sign}(Y_x - y)  ~ \big | ~  Z = z\right ]$ in  $R_{x'}(t| x, z, y)$ is conditioned on $Z=z$, and when $Z$ is a continuous variable with infinite possible values, it cannot be estimated by simply splitting the data based on $Z$. 
We employ inverse propensity score and kernel smoothing techniques to overcome these two challenges. 
 Specifically, we propose a kernel-smoothing-based estimator for the ideal loss, which is given as 
  \begin{align*}
        \hat R_{x'}(t| x, &z, y) ={} \frac{ \sum_{k=1}^N K_h(z_k -z) \frac{\mathbb{I}(x_k=x')}{\hat p_{x'}(z_k)} | y_k - t |  }{ \sum_{k=1}^N K_h(z_k -z) }  + \frac{ \sum_{k=1}^N K_h(z_k -z) \frac{\mathbb{I}(x_k=x)}{\hat p_{x}(z_k)} \cdot \text{sign}(y_k - y) }{  \sum_{k=1}^N K_h(z_k -z) }  \cdot t,
  \end{align*}
 where $h$ is a bandwidth/smoothing parameter, $K_h(u) = K(u/h)/h$, and $K(\cdot)$ is a symmetric kernel function~\citep{Fan-Gijbels1996, Li-Racine-2007, Li-etal2024-interference} that satisfies $\int K(u)du = 1$ and $\int u K(u)du =1$, such as Epanechnikov kernel $K(u) = 3(1-u^2) \cdot \mathbb{I}(|u|\leq 1)/4$  
 and Gaussian kernel $K(u)= \exp(-u^2/2)/\sqrt{2 \pi}$ for $u\in \mathbb{R}$. Then we can estimate $y_{x'}$ by minimizing $\hat R_{x'}(t; x, z, y)$ directly. 

\begin{proposition}[Consistency] \label{prop5} If $h \to 0$ as $N \to \infty$, 
 $\hat p_x(z)$ and $\hat p_{x'}(z)$ are consistent estimates of $p_x(z)$ and $p_{x'}(z)$, and the density function of $Z$ is differentiable, then  
$\hat R_{x'}(t| x, z, y)$ converges to $R_{x'}(t| x, z, y)$ 
 in probability.  
\end{proposition}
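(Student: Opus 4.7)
The plan is to decompose $\hat R_{x'}(t|x,z,y)$ into two Nadaraya--Watson-type ratios sharing a common denominator, analyze each numerator and the denominator separately, and combine via Slutsky's theorem. Specifically, write
\begin{align*}
A_N^{(1)} &= \frac{1}{N}\sum_{k=1}^N K_h(z_k-z)\,\frac{\mathbb{I}(x_k=x')}{\hat p_{x'}(z_k)}\,|y_k-t|,\\
A_N^{(2)} &= \frac{1}{N}\sum_{k=1}^N K_h(z_k-z)\,\frac{\mathbb{I}(x_k=x)}{\hat p_{x}(z_k)}\,\text{sign}(y_k-y),\\
B_N &= \frac{1}{N}\sum_{k=1}^N K_h(z_k-z),
\end{align*}
so that $\hat R_{x'}(t|x,z,y) = A_N^{(1)}/B_N + (A_N^{(2)}/B_N)\cdot t$. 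It suffices to establish the pointwise-in-$z$ limits $B_N \xrightarrow{p} f_Z(z)$, $A_N^{(1)} \xrightarrow{p} f_Z(z)\,\bfE[|Y_{x'}-t|\mid Z=z]$, and $A_N^{(2)} \xrightarrow{p} f_Z(z)\,\bfE[\text{sign}(Y_x-y)\mid Z=z]$.

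The limit for $B_N$ is the classical kernel density estimator result: differentiability of $f_Z$ makes the bias $\int K(u)[f_Z(z+hu)-f_Z(z)]\,du \to 0$ under $h\to 0$, while the variance is $O(1/(Nh))$, vanishing under the standard companion condition $Nh\to\infty$ implicit in the kernel setup. For $A_N^{(1)}$, I would first swap the estimated propensity for its population analogue. Writing $\tilde A_N^{(1)}$ for the corresponding object with $p_{x'}$ in place of $\hat p_{x'}$, positivity ($p_{x'}\geq c>0$) and consistency of $\hat p_{x'}$ give
\[
\bigl|A_N^{(1)}-\tilde A_N^{(1)}\bigr| \leq \sup_{z'}\Bigl|\tfrac{1}{\hat p_{x'}(z')}-\tfrac{1}{p_{x'}(z')}\Bigr|\cdot \frac{1}{N}\sum_{k=1}^N K_h(z_k-z)\,|y_k-t| \xrightarrow{p} 0.
\]
Then the backdoor independence $(Y_x,Y_{x'})\indep X\mid Z$ yields the IPW identity
\[
\bfE\!\left[\tfrac{\mathbb{I}(X=x')}{p_{x'}(Z)}|Y-t|\,\Big|\, Z=z\right] = \bfE[|Y_{x'}-t|\mid Z=z],
\]
so a standard Nadaraya--Watson argument (bias vanishes as $h\to 0$ by smoothness of the conditional expectation in $z$; variance is $O(1/(Nh))$) gives $\tilde A_N^{(1)} \xrightarrow{p} f_Z(z)\,\bfE[|Y_{x'}-t|\mid Z=z]$. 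The analysis of $A_N^{(2)}$ is entirely parallel, using boundedness of $\text{sign}(\cdot)$ and the analogous IPW identity conditional on $Z$. Assembling via Slutsky (which is valid since $f_Z(z)>0$) delivers the claimed convergence.

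The main obstacle is the substitution of $\hat p$ by $p$ inside the kernel-weighted sum: pointwise consistency of $\hat p_{x'}(z)$ at the single point $z$ is insufficient because the sum involves $\hat p_{x'}(z_k)$ at every sampled $z_k$, so one really needs (uniform, or at least uniform on a neighborhood shrinking with $h$) consistency of $\hat p_{x'}$, together with the overlap condition $p_{x'}(\cdot)\geq c>0$ to keep the weights bounded. Beyond this, some integrability of $|Y|$ is needed to control the variance of the numerator, and the standard bandwidth condition $Nh\to\infty$ should be read into the stated assumption $h\to 0$ as $N\to\infty$; all other ingredients reduce to textbook kernel-smoothing calculations.
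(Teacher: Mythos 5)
Your proof follows essentially the same route as the paper's: decompose $\hat R_{x'}$ into Nadaraya--Watson ratios with a common kernel denominator, show the denominator converges to the density $g(z)$ and each numerator to $g(z)$ times the corresponding conditional expectation via the backdoor/IPW identity and a Taylor expansion of the kernel bias, then combine with Slutsky. You are in fact somewhat more careful than the paper, which silently replaces $\hat p_{x'}(z_k)$ by $p_{x'}(z_k)$ inside the sum and does not mention the $Nh\to\infty$ variance condition; your observations that uniform (not merely pointwise) consistency of the propensity estimates and an overlap bound are needed for that substitution are correct refinements rather than deviations.
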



Proposition \ref{prop5} indicates that $\hat R_{x'}(t| x, z, y)$ is a consistent 
estimator of $R_{x'}(t| x, z, y)$, demonstrating the validity of the estimated ideal loss. The loss $\hat R_{x'}(t| x, z, y)$ is applicable only for discrete treatments due to the terms $\mathbb{I}(x_k = x')$ and $\mathbb{I}(x_k = x)$. However, it can be easily extended to continuous treatments, as detailed in Appendix \ref{app-d}.




\subsection{Further Theoretical Analysis}
We further analyze the properties of the proposed method, including the unbiasedness preservation of the ideal loss, the bias of the estimated loss $\hat R_{x'}(t| x, z, y)$ and its impact on the final estimate of the counterfactual outcome $y_{x'}$. 



First, we present the property of unbiasedness preservation. Let $R_{x'}^{\mathrm{weight}}(t| x, z, y)$ be  the weighted version of $R_{x'}(t| x, z, y)$, defined by   
\begin{gather*}\bfE \left [ w(X, Z)   | Y_{x'} - t |  \big |  Z=z \right ]+\bfE \left [ w(X, Z)\text{sign}(Y_x - y)   \big |   Z = z\right ] \cdot t, 
\end{gather*} 
where the weight $ w(X, Z)$ is an arbitrary function of $(X, Z)$. The following Theorem \ref{thm5-4} shows that 
$R_{x'}^{\mathrm{weight}}(t| x, z, y)$ is also valid for estimating the counterfactual outcome $y_{x'}$.   
\begin{theorem}[Unbiasedness Preservation] \label{thm5-4} 
The loss $R_{x'}^{\mathrm{weight}}(t| x, z, y)$ is convex in 
$t$ and is minimized uniquely at $t^*$ that satisfies 
       $\P(Y_{x'} \leq t^* | Z=z) = \P(Y_x \leq y| Z=z).$
\end{theorem}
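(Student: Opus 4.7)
The plan is to reduce the weighted loss $R_{x'}^{\mathrm{weight}}(t \mid x,z,y)$ to a positive scalar multiple of the original ideal loss $R_{x'}(t \mid x,z,y)$ and then invoke Theorem~\ref{thm-1}. The essential ingredient is the backdoor-style conditional independence $(Y_x, Y_{x'}) \indep X \mid Z$ that the excerpt already uses (e.g., in the derivation immediately before Proposition~\ref{prop1}) to identify counterfactual distributions as observational ones.

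Fixing the evaluation point $z$, I would first note that conditioning on $Z=z$ collapses $w(X,Z)$ to a function $w(X,z)$ of the random variable $X$ alone. Iterating expectations and using the conditional independence above, each of the two conditional expectations appearing in $R_{x'}^{\mathrm{weight}}$ factors cleanly:
\begin{align*}
\bfE\!\left[w(X,Z)\, |Y_{x'}-t| \,\big|\, Z=z\right]
&= \bfE\!\left[w(X,z)\,\bfE[|Y_{x'}-t| \mid X, Z=z] \,\big|\, Z=z\right] \\
&= \bfE[w(X,z) \mid Z=z]\cdot \bfE[|Y_{x'}-t| \mid Z=z],
\end{align*}
and analogously for the $\text{sign}(Y_x - y)$ term, since $Y_x \indep X \mid Z$ as well. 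Setting $c_z := \bfE[w(X,z) \mid Z=z]$, this yields the identity $R_{x'}^{\mathrm{weight}}(t \mid x,z,y) = c_z\cdot R_{x'}(t \mid x,z,y)$. Assuming $c_z>0$, rescaling by a positive constant preserves both convexity in $t$ and the location of the unique minimizer, so the desired conclusion transfers directly from Theorem~\ref{thm-1}: the minimizer $t^*$ satisfies $\P(Y_{x'}\le t^*\mid Z=z)=\P(Y_x\le y\mid Z=z)$.

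The main obstacle, such as it is, is the implicit positivity requirement on $c_z$. The statement is made for an "arbitrary" weight $w(X,Z)$, but convexity and uniqueness genuinely require $c_z>0$; I would therefore flag that the natural sufficient condition is $w(\cdot,z)>0$ almost surely, which is automatic for the inverse-propensity-type reweightings that motivate the theorem in the first place. Beyond this mild caveat, the argument is a one-line consequence of the backdoor conditional independence combined with Theorem~\ref{thm-1}, and no further technical machinery (no kernel analysis, no monotonicity-of-quantile arguments) is needed.
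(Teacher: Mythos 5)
Your proof is correct and follows essentially the same route as the paper: conditioning on $Z=z$ and using $(Y_x,Y_{x'})\indep X\mid Z$ to factor the weight out as $\bfE[w(X,Z)\mid Z=z]$, so that $R_{x'}^{\mathrm{weight}}$ is a constant multiple of $R_{x'}$ and Theorem~\ref{thm-1} applies. Your observation that this constant must be strictly positive for convexity and the unique minimizer to be preserved is a legitimate caveat that the paper's own proof silently omits despite stating the weight is ``arbitrary.''
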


From Theorem \ref{thm5-4} (see Appendix \ref{app-b} for proofs), if we set 
  $w(X, Z) = K_h(Z_k -z)/\{ \sum_{k=1}^N K_h(Z_k -z)/N\}$, 
 then the unbiasedness preservation property implicitly indicates that the proposed method is less sensitive to the choice of the kernel function. 
 

%

Then, we show the bias of the estimated loss $\hat R_{x'}(t| x, z, y)$.
\begin{proposition}[Bias of the Estimated Loss] \label{prop5-5} 
If $h \to 0$ as $N \to \infty$,  $p_x(z)/\hat p_x(z)$ and $p_{x'}(z)/\hat p_{x'}(z)$ are differentiable with respect to $z$, and the density function of $Z$ is differentiable, then the bias of $\hat R_{x'}(t| x, z, y)$, defined by $\bfE[ \hat R_{x'}(t|x,z,y) ] - R_{x'}(t|x,z,y)$, is given as 
    \begin{align*}
        \text{Bias}(&\hat R_{x'})  ={}
     \delta_{p_{x'}}  \bfE \left [  | Y_{x'} - t |  ~ \big | ~ Z=z \right ]  + \delta_{p_{x}}  \bfE \left [ \text{sign}(Y_x - y)  ~ \big | ~  Z = z\right ] \cdot t + O(h^2), 
    \end{align*} where $ \delta_{p_{x'}}  = (p_{x'}(z) - \hat p_{x'}(z))/\hat p_{x'}(z)$ and $\delta_{p_{x}}  = (p_{x}(z) - \hat p_{x}(z))/\hat p_{x}(z)$ are estimation errors of propensity scores.      
\end{proposition}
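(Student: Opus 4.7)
The estimator $\hat R_{x'}(t|x,z,y)$ is a sum of two Nadaraya--Watson-type ratios, so the plan is to analyze each via classical kernel-smoothing bias expansions, treating $\hat p_x$ and $\hat p_{x'}$ as fixed plug-in functions (independent of the sample entering the kernel sums). Setting $g_1(X,Y,Z) = \mathbb{I}(X=x')|Y-t|/\hat p_{x'}(Z)$ and $g_2(X,Y,Z) = \mathbb{I}(X=x)\,\mathrm{sign}(Y-y)/\hat p_{x}(Z)$, the tower property combined with the backdoor identification $(Y_x,Y_{x'}) \indep X \mid Z$ gives
\begin{align*}
m_1(z) &:= \bfE[g_1 \mid Z=z] = \tfrac{p_{x'}(z)}{\hat p_{x'}(z)}\,\bfE[|Y_{x'}-t| \mid Z=z] = (1+\delta_{p_{x'}})\,\bfE[|Y_{x'}-t| \mid Z=z], \\
m_2(z) &:= \bfE[g_2 \mid Z=z] = (1+\delta_{p_x})\,\bfE[\mathrm{sign}(Y_x-y) \mid Z=z],
\end{align*}
which already cleanly isolates the intended ideal-loss contribution plus a multiplicative propensity-score-error factor.

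For the numerator of either ratio,
\begin{align*}
\bfE\!\left[\tfrac{1}{N}\sum_k K_h(z_k-z)\,g_i(X_k,Y_k,Z_k)\right] = \int K_h(u-z)\,m_i(u)\,f_Z(u)\,du.
\end{align*}
Changing variables $v=(u-z)/h$ and Taylor-expanding $m_i(u)f_Z(u)$ around $u=z$ (twice differentiable by hypothesis on $p_\cdot/\hat p_\cdot$ and $f_Z$), using $\int K=1$ and the symmetry $\int v\,K(v)\,dv = 0$ (I read the paper's ``$=1$'' as a typo for ``$=0$''), I obtain $m_i(z)f_Z(z) + \tfrac{h^2}{2}\mu_2(K)[m_i f_Z]''(z) + o(h^2)$; the denominator admits the analogous expansion with $m_i \equiv 1$. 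A first-order expansion of $A/B$ about $(m_i f_Z,\,f_Z)$ then yields $\bfE[\hat m_i(z)] = m_i(z) + O(h^2)$. Summing both ratios (the second weighted by $t$) and substituting $m_1, m_2$ gives
\begin{align*}
\bfE[\hat R_{x'}(t|x,z,y)] - R_{x'}(t|x,z,y) = \delta_{p_{x'}}\bfE[|Y_{x'}-t| \mid Z=z] + \delta_{p_x}\bfE[\mathrm{sign}(Y_x-y) \mid Z=z]\cdot t + O(h^2),
\end{align*}
which is precisely the claimed bias.

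\textbf{Main obstacle.} The delicate step is the ratio expansion: the $h^2$ kernel bias appears in \emph{both} numerator and denominator, and one must combine them so that the leading factor $f_Z(z)$ cancels cleanly while the $o(h^2)$ remainder remains uniformly controlled on a neighborhood where $f_Z(z)>0$. The differentiability hypotheses on $p_x/\hat p_x$, $p_{x'}/\hat p_{x'}$, and $f_Z$ are used essentially here, both to guarantee that the Taylor remainders are of the advertised order and to permit the interchange of expectation with the ratio expansion. A secondary care point is that the argument treats $\hat p_x,\hat p_{x'}$ as deterministic inside the outer expectation; if they are estimated on the same sample that enters the kernel sums, additional stochastic-fluctuation terms would need to be absorbed into the $O(h^2)$ remainder, for instance via a sample-splitting or cross-fitting device.
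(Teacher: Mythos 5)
Your proposal follows essentially the same route as the paper's own proof: decompose $\hat R_{x'}$ into two Nadaraya--Watson ratios, apply the tower property and the backdoor condition to pull out the factor $p_{x'}(z)/\hat p_{x'}(z) = 1+\delta_{p_{x'}}$ (resp.\ $1+\delta_{p_x}$), Taylor-expand the kernel convolutions in numerator and denominator, and combine. If anything you are more careful than the paper at the two delicate points --- correctly reading the kernel condition as $\int uK(u)\,du=0$ (the paper's ``$=1$'' is indeed a typo) and explicitly handling the expectation of the ratio via a first-order expansion rather than the paper's loose appeal to Slutsky's theorem --- so the proof stands.
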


From Proposition \ref{prop5-5}, the bias of $\hat R_{x'}(t| x, z, y)$ consists of two components. The first is the estimation error of propensity scores. The second component arises from the kernel smoothing technique and is of order $h^2$. 
In addition, when $\hat p_x(z)$ and $\hat p_{x'}(z)$ are consistent estimators of $p_x(z)$ and $p_{x'}(z)$ (a weak condition), the bias converges to zero, and Proposition \ref{prop5-5} simplifies to Proposition \ref{prop5}.

Finally, we examine how the estimated $y_{x'}$, denoted as $\hat y_{x'} \triangleq \arg \min_t \hat R_{x'}(t| x, z, y)$, is influenced by the bias in the estimated loss.

\begin{theorem}[Bias of the Estimated Counterfactual Outcome] \label{thm5-6} Under the same conditions as in Proposition \ref{prop5-5}, $\hat y_{x'}$ converges to $\bar y_{x'}$  in probability, where $\bar y_{x'}$ satisfies that 
    \begin{gather*}  \P( Y_{x'} \leq \bar y_{x}|Z=z ) = \frac{ (2+2\delta_{p_{x}}) \P(Y_{x}\leq y| Z=z) + (\delta_{p_{x'}}- \delta_{p_{x}})  }{ 2 + 2\delta_{p_{x'}}  },
    \end{gather*}
 where $\bar y_{x'}$ may not equal to the true value $y_{x'}$ and their difference is the bias.    
\end{theorem}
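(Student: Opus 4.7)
The plan is to combine the bias expansion of Proposition \ref{prop5-5} with the convexity of the loss in $t$ to pin down the probability limit of $\hat y_{x'} = \arg\min_t \hat R_{x'}(t|x,z,y)$, and then solve an explicit first-order condition.

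First I would take expectations in Proposition \ref{prop5-5} and let $h \to 0$ (absorbing the $O(h^2)$ term) to identify the ``population target'' that $\hat R_{x'}(t|x,z,y)$ is actually estimating, namely
\[
\bar R_{x'}(t) \;=\; (1+\delta_{p_{x'}})\,\bfE\!\left[|Y_{x'}-t| \,\big|\, Z=z\right] + (1+\delta_{p_x})\,\bfE\!\left[\mathrm{sign}(Y_x-y) \,\big|\, Z=z\right]\cdot t.
\]
The same kernel/IPW argument used in the proof of Proposition \ref{prop5} then gives $\hat R_{x'}(t|x,z,y) \to \bar R_{x'}(t)$ in probability for each fixed $t$.

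Next I would argue that the argmin itself converges. Both $\hat R_{x'}(t|x,z,y)$ and $\bar R_{x'}(t)$ are convex in $t$ (the first term is a nonnegative mixture of $|y_k - t|$, which is convex, and the second term is linear in $t$), so by the standard convexity lemma for stochastic processes, pointwise convergence in probability upgrades to uniform convergence in probability on every compact subset of $\mathbb{R}$. Combined with the uniqueness of the minimizer of $\bar R_{x'}$ (guaranteed by the strict convexity argument already used in Theorems \ref{thm-1} and \ref{thm5-4}, provided $Y_{x'}\mid Z{=}z$ has no point masses at the minimizer), this yields $\hat y_{x'} \to \bar y_{x'}$ in probability, where $\bar y_{x'} \triangleq \arg\min_t \bar R_{x'}(t)$.

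It then remains to solve the first-order condition. Differentiating under the expectation gives $\frac{d}{dt}\bfE[|Y_{x'}-t|\mid Z=z] = 2\P(Y_{x'}\le t\mid Z=z) - 1$, and noting $\bfE[\mathrm{sign}(Y_x-y)\mid Z=z] = 1 - 2\P(Y_x\le y\mid Z=z)$, the equation $\bar R_{x'}'(\bar y_{x'}) = 0$ becomes
\[
(1+\delta_{p_{x'}})\bigl(2\P(Y_{x'}\le \bar y_{x'}\mid Z=z) - 1\bigr) + (1+\delta_{p_x})\bigl(1 - 2\P(Y_x\le y\mid Z=z)\bigr) = 0,
\]
which rearranges directly to the formula stated in the theorem.

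The main obstacle is the passage from pointwise to uniform convergence and the corresponding argmin-continuity step: one must ensure strict convexity (equivalently, no atoms in the conditional distribution of $Y_{x'}$ at the target quantile) and a bounded search region so that the convexity lemma applies. Once that is in place, the differentiation step is routine and the algebra matches the stated expression verbatim. Note also that when $\delta_{p_x} = \delta_{p_{x'}} = 0$ the first-order condition collapses to $\P(Y_{x'}\le \bar y_{x'}\mid Z=z) = \P(Y_x\le y\mid Z=z)$, recovering Theorem \ref{thm-1} as a sanity check.
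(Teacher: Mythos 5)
Your proposal is correct and follows essentially the same route as the paper: identify the probability limit of $\hat R_{x'}(t|x,z,y)$ from the bias expansion of Proposition \ref{prop5-5} (noting $p_{x'}(z)/\hat p_{x'}(z) = 1+\delta_{p_{x'}}$), invoke argmin consistency for convex objectives (the paper cites M-estimation theory where you spell out the convexity lemma), and solve the first-order condition, whose rearrangement matches the stated formula exactly. Your version is somewhat more explicit about the uniform-convergence and no-atoms conditions needed for the argmin step, but the substance is the same.
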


From Theorem \ref{thm5-6}, one can see that the bias of $\hat y_{x'}$ mainly depends on the estimation error of propensity scores. When $\delta_{p_{x}} = \delta_{p_{x'}}$, the equation in Theorem \ref{thm5-6} reduces to equation (\ref{eq4}), and thus $\bar y_{x'} = y_{x'}$, i.e., no bias. 
Typically, both  $\delta_{p_{x}}$ and $\delta_{p_{x'}}$  are small due to the consistency of estimated propensity scores (a weak condition), and thus $\bar y_{x'}$ will close to $ y_{x'}$. 

\begin{table*}[t]
\centering
\caption{$\sqrt{\epsilon_{\text{PEHE}}}$ of individual treatment effect estimation on the simulated Sim-$m$ dataset, where $m$ is the dimension of $Z$.}
\resizebox{1\linewidth}{!}{
\begin{tabular}{l|ll|ll|ll|ll}
\toprule
          & \multicolumn{2}{c|}{Sim-5}                                                                                & \multicolumn{2}{c|}{Sim-10}   
          & \multicolumn{2}{c|}{Sim-20}          
          & \multicolumn{2}{c}{Sim-40}
          
              \\ \midrule
Methods   & \multicolumn{1}{c}{In-sample} & \multicolumn{1}{c|}{Out-sample} & \multicolumn{1}{c}{In-sample} & \multicolumn{1}{c|}{Out-sample} & \multicolumn{1}{c}{In-sample} & \multicolumn{1}{c|}{Out-sample} & \multicolumn{1}{c}{In-sample} & \multicolumn{1}{c}{Out-sample} \\
\cmidrule{1-9}
T-learner & 2.95 $\pm$ 0.02                & 2.66 $\pm$ 0.01               &  2.99 $\pm$ 0.01                & 3.17 $\pm$ 0.01                &  3.36 $\pm$ 0.02                        & 3.19 $\pm$ 0.03                        & 5.12 $\pm$ 0.02                        & 4.74 $\pm$ 0.04                       \\
X-learner & 2.94 $\pm$ 0.01                & 2.66 $\pm$ 0.01               & 2.98 $\pm$ 0.02               & 3.19 $\pm$ 0.02               &  3.31 $\pm$ 0.02                        &  3.21 $\pm$ 0.02                       &  5.08 $\pm$ 0.04                       & 4.77 $\pm$ 0.03                        \\
BNN       &  2.91 $\pm$ 0.08               & 2.64 $\pm$ 0.07           & 2.90 $\pm$ 0.11               & 3.08 $\pm$ 0.12   &  3.21 $\pm$ 0.13                        & 3.13 $\pm$ 0.16                        & 4.81 $\pm$ 0.10                         & 4.54 $\pm$ 0.09                        \\
TARNet    & 2.89 $\pm$ 0.07               & 2.64 $\pm$ 0.06            & 2.94 $\pm$ 0.07              & 3.16 $\pm$ 0.08                & 3.18 $\pm$ 0.07                         & 3.11 $\pm$ 0.07                        &  4.82 $\pm$ 0.07                       &  4.56 $\pm$ 0.07                       \\
CFRNet    & 2.88 $\pm$ 0.07                &  2.62 $\pm$ 0.06               & 2.94 $\pm$ 0.07                &  3.15 $\pm$ 0.08                & 3.15 $\pm$ 0.07                         & 3.08 $\pm$ 0.07                        & 4.71 $\pm$ 0.12                         & 4.45 $\pm$ 0.11                        \\
CEVAE    & 2.92 $\pm$ 0.27               & 2.65 $\pm$ 0.21            & 3.04 $\pm$ 0.27              & 3.11 $\pm$ 0.18                & 3.16 $\pm$ 0.17                         & 3.11 $\pm$ 0.17                        &  4.88 $\pm$ 0.23                       &  4.53 $\pm$ 0.20                         \\

DragonNet & 2.90 $\pm$ 0.08                & 2.63 $\pm$ 0.08               &  3.02 $\pm$ 0.07               & 3.25 $\pm$ 0.08                &  3.16 $\pm$ 0.11                        & 3.09 $\pm$ 0.10                        & 4.78 $\pm$ 0.11                        & 4.50 $\pm$ 0.12                        \\
DeRCFR    & 2.88 $\pm$ 0.06                & 2.61 $\pm$ 0.06               &  2.87 $\pm$ 0.05               & 3.07 $\pm$ 0.06                &  3.11 $\pm$ 0.07                        & 3.04 $\pm$ 0.06                        & 4.77 $\pm$ 0.11                         & 4.50 $\pm$ 0.10                        \\
DESCN     & 2.93 $\pm$ 0.11                & 2.66 $\pm$ 0.09               & 3.27 $\pm$ 0.81                & 3.46 $\pm$ 0.79                & 3.12 $\pm$ 0.20                         & 3.06 $\pm$ 0.20                        &  4.91 $\pm$ 0.37                        & 4.59 $\pm$ 0.35                        \\
ESCFR     & 2.87 $\pm$ 0.08               &  2.62 $\pm$ 0.07               & 2.94 $\pm$ 0.08                & 3.15 $\pm$ 0.09               & 3.03 $\pm$ 0.09                         & 3.06 $\pm$ 0.09                       & 4.71 $\pm$ 0.15                         &  4.43 $\pm$ 0.15                       \\
CFQP & 2.91 $\pm$ 0.09	& 2.67 $\pm$ 0.11	& 3.14 $\pm$ 0.30	& 3.40 $\pm$ 0.37 & 3.21 $\pm$ 0.12 &	3.18 $\pm$ 0.11 	& 4.93 $\pm$ 0.14	& 4.55 $\pm$ 0.13 \\
Quantile-Reg     & 2.80 $\pm$ 0.06               &  2.54 $\pm$ 0.05               & 2.78 $\pm$ 0.08                & 3.05 $\pm$ 0.09               & 2.92 $\pm$ 0.07                         & 3.01 $\pm$ 0.08                       & 4.39 $\pm$ 0.13                         &  4.12 $\pm$ 0.10                       \\
Ours      & \textbf{{2.45 $\pm$ 0.17}}                & \textbf{{2.28 $\pm$ 0.23}}               & \textbf{2.25 $\pm$ 0.07}                & \textbf{2.33 $\pm$ 0.07}                & \textbf{2.51 $\pm$ 0.07}                         &  \textbf{2.46 $\pm$ 0.06}                       &  \textbf{{3.74 $\pm$ 0.26}}                       & \textbf{{3.66 $\pm$ 0.21}}                        \\ \bottomrule
\end{tabular}}
\label{tab:sim}
\end{table*}


\section{Experiments}  \label{sec7}

\begin{table*}[]
\centering
\caption{$\sqrt{\epsilon_{\text{PEHE}}}$ of individual treatment effect estimation on the simulated Sim-$m$ dataset, where $m$ is the dimension of $Z$.}
\resizebox{1\linewidth}{!}{
\begin{tabular}{l|cc|cc|cc|cc}
\toprule  & \multicolumn{2}{c|}{Sim-80 ($\rho=0.3$)}   & \multicolumn{2}{c|}{Sim-80 ($\rho=0.5$)}  & \multicolumn{2}{c|}{Sim-40 ($\rho=0.3$)}   & \multicolumn{2}{c}{Sim-40 ($\rho=0.5$)}
\\ \midrule
Methods   & \multicolumn{1}{c}{In-sample} & \multicolumn{1}{c|}{Out-sample} & \multicolumn{1}{c}{In-sample} & \multicolumn{1}{c|}{Out-sample} & \multicolumn{1}{c}{In-sample} & \multicolumn{1}{c|}{Out-sample} & \multicolumn{1}{c}{In-sample} & \multicolumn{1}{c}{Out-sample} \\
\cmidrule{1-9}
TARNet     & 12.63 $\pm$  0.93 & 12.51 $\pm$  0.90 & 12.35 $\pm$  1.24 & 12.68 $\pm$  1.51 & 8.91 $\pm$  0.97 & 8.78 $\pm$  0.74 & 8.76 $\pm$  0.76 & 8.51 $\pm$  0.68 \\
DragonNet  & 12.50 $\pm$  0.75 & 12.36 $\pm$  0.80 & 12.71 $\pm$  1.29 & 13.02 $\pm$  1.54 & 8.83 $\pm$  0.90 & 8.73 $\pm$  0.72 & 8.62 $\pm$  0.70 & 8.39 $\pm$  0.53 \\
ESCFR     & 12.61 $\pm$  1.09 & 12.53 $\pm$  1.09 & 12.56 $\pm$  1.36 & 12.87 $\pm$  1.64 & 8.76 $\pm$  1.03 & 8.65 $\pm$  0.79 & 8.76 $\pm$  0.78 & 8.50 $\pm$  0.48 \\
X\_learner & 12.82 $\pm$  0.91 & 12.68 $\pm$  0.95 & 12.74 $\pm$  1.22 & 12.99 $\pm$  1.43 & 8.97 $\pm$  0.87 & 8.81 $\pm$  0.64 & 8.91 $\pm$  0.75 & 8.61 $\pm$  0.58 \\
Quantile-Reg & 11.59 $\pm$  0.94 & 11.57 $\pm$  0.97 & 11.59 $\pm$  1.26 & 11.91 $\pm$  1.47 & 8.05 $\pm$  0.73 & 8.08 $\pm$  0.75 & 7.74 $\pm$  0.73 & 7.58 $\pm$  0.73 \\
Ours       & \textbf{9.28 $\pm$  0.72}  & \textbf{9.28 $\pm$  0.72}  & \textbf{9.03 $\pm$  1.09}  & \textbf{9.27 $\pm$  0.97}  & \textbf{7.07 $\pm$  0.39} & \textbf{7.05 $\pm$  0.41} & \textbf{7.07 $\pm$  1.23} & \textbf{6.98 $\pm$  1.08} \\
\bottomrule
\end{tabular}}
\label{tab:sim_cov}
\end{table*}

\begin{figure*}[t]
\centering
\subfloat[Sim-10 (In-sample)]{
\begin{minipage}[t]{0.245\linewidth}
\centering
\includegraphics[width=1\textwidth]{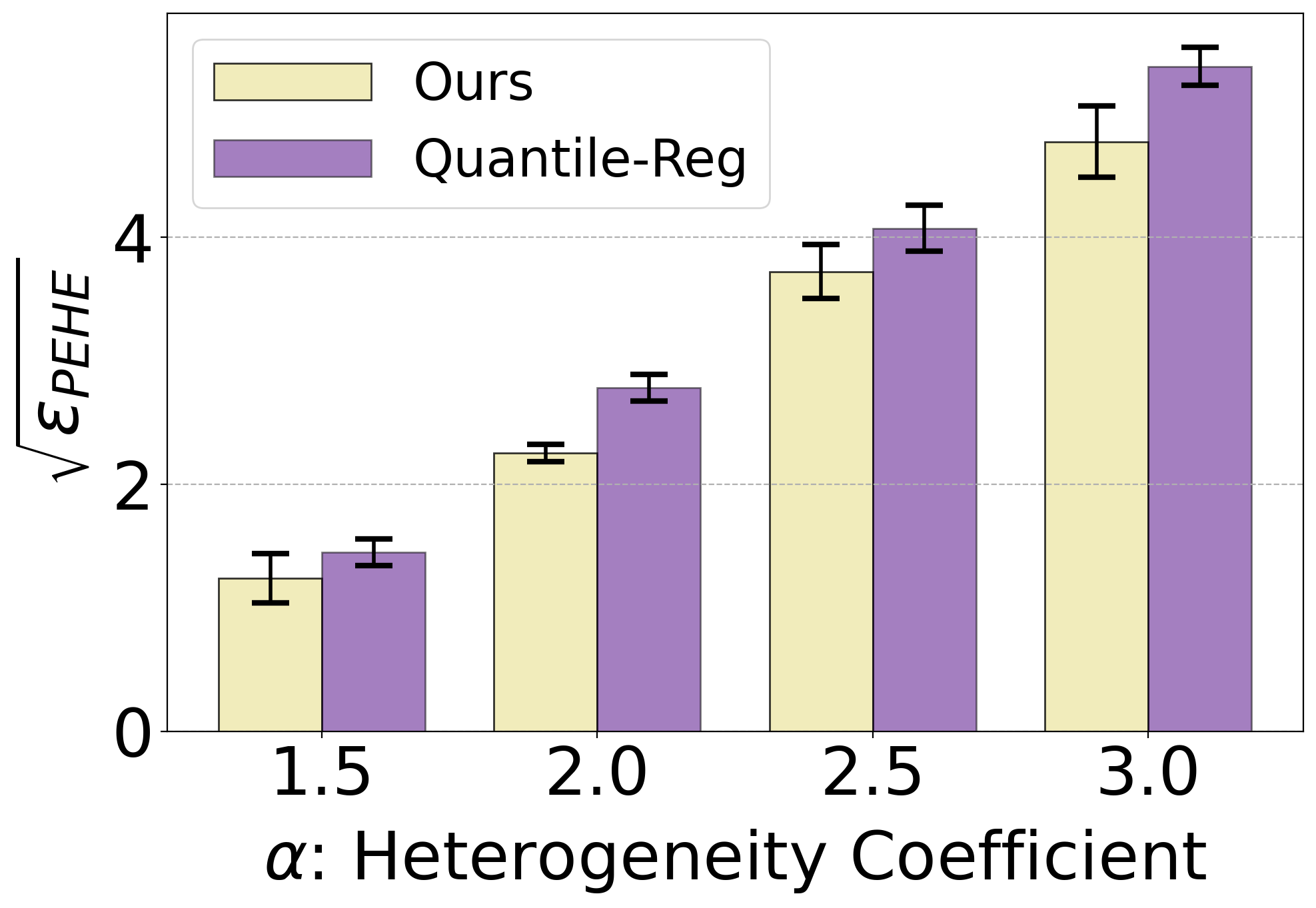}
\end{minipage}}%
\subfloat[Sim-10 (Out-sample)]{
\begin{minipage}[t]{0.245\linewidth}
\centering
\includegraphics[width=0.93\textwidth]{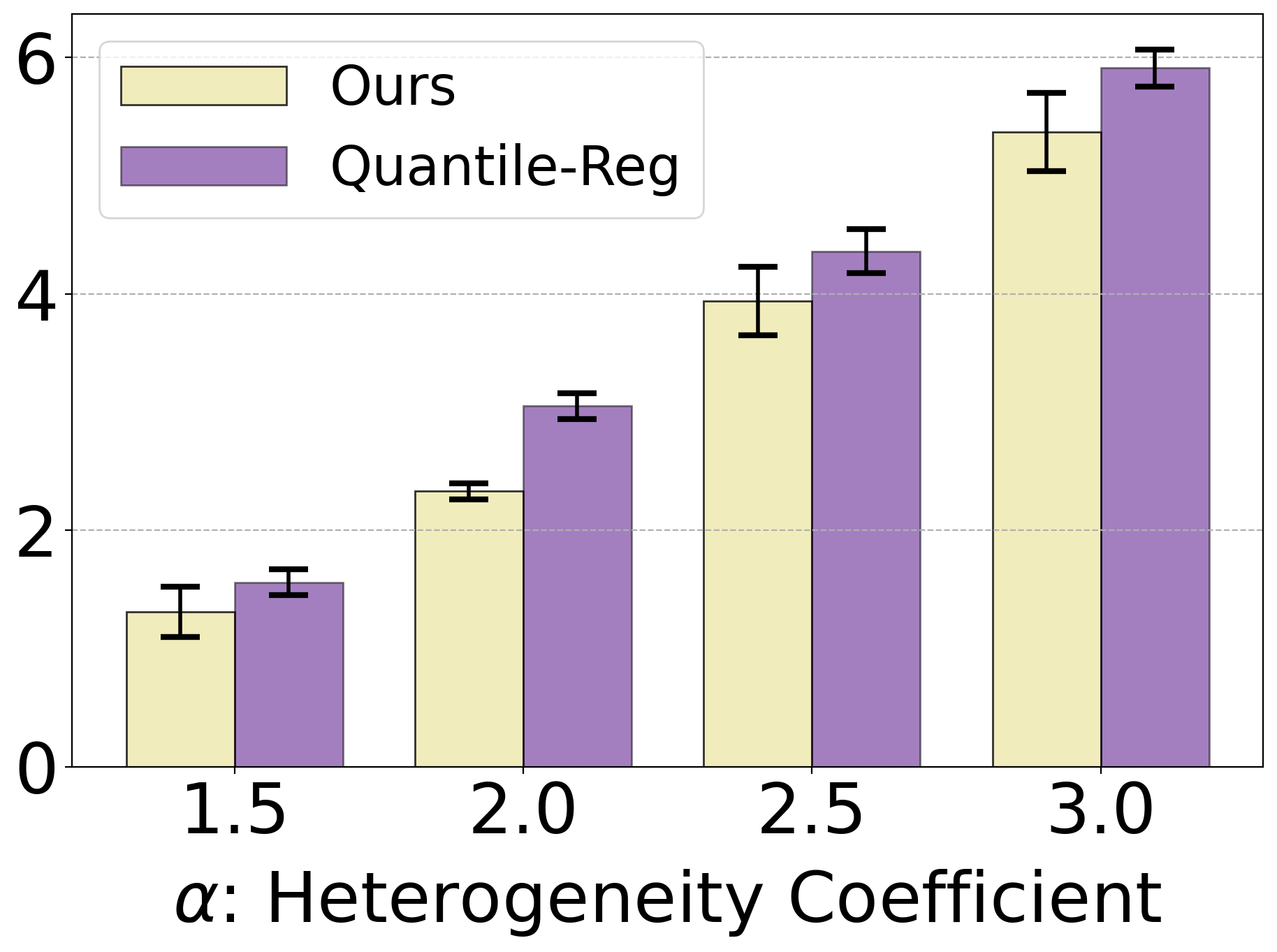}
\end{minipage}}%
\subfloat[Sim-40 (In-sample)]{
\begin{minipage}[t]{0.245\linewidth}
\centering
\includegraphics[width=0.95\textwidth]{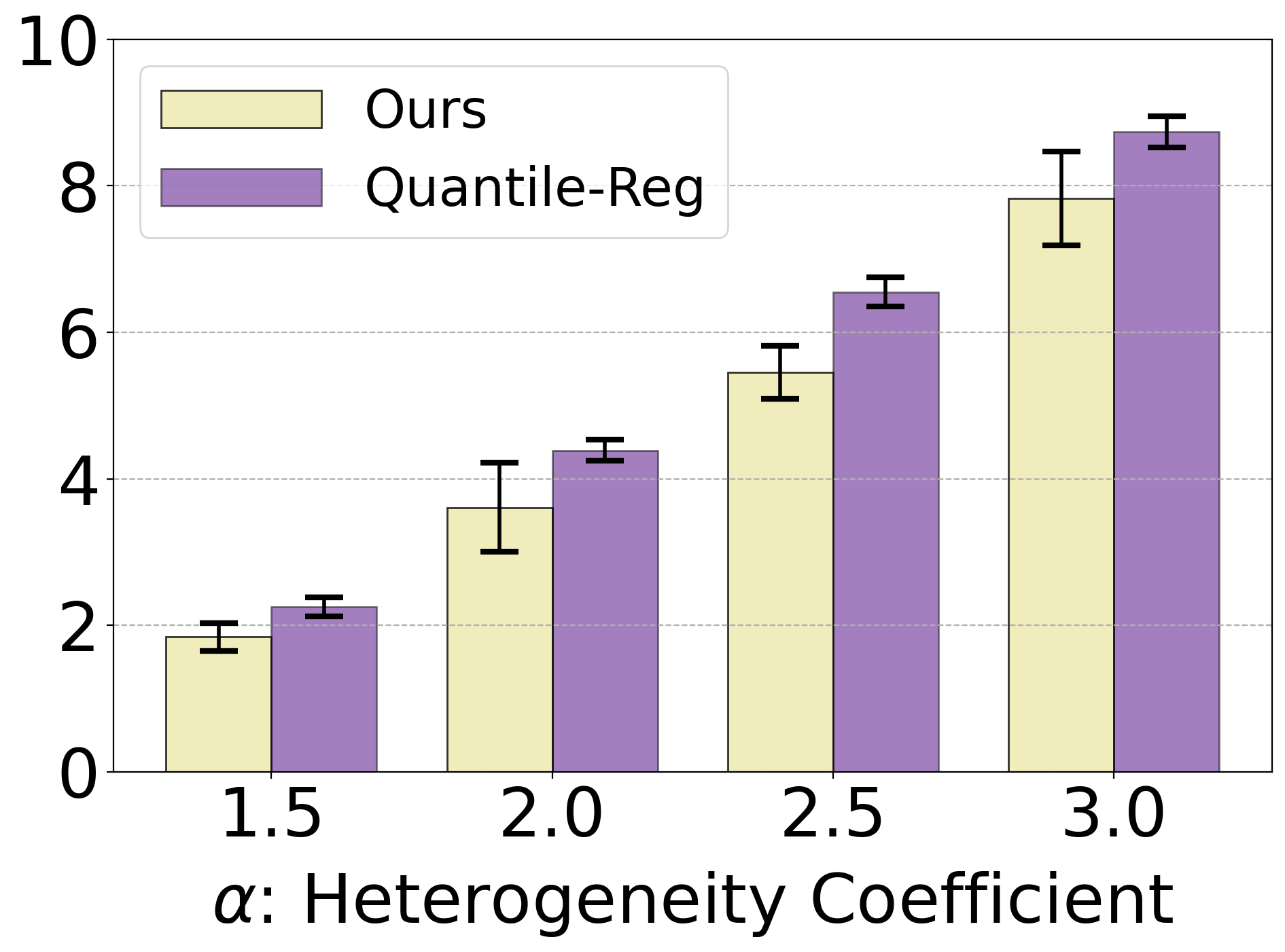}
\end{minipage}}%
\subfloat[Sim-40 (Out-sample)]{
\begin{minipage}[t]{0.245\linewidth}
\centering
\includegraphics[width=0.95\textwidth]{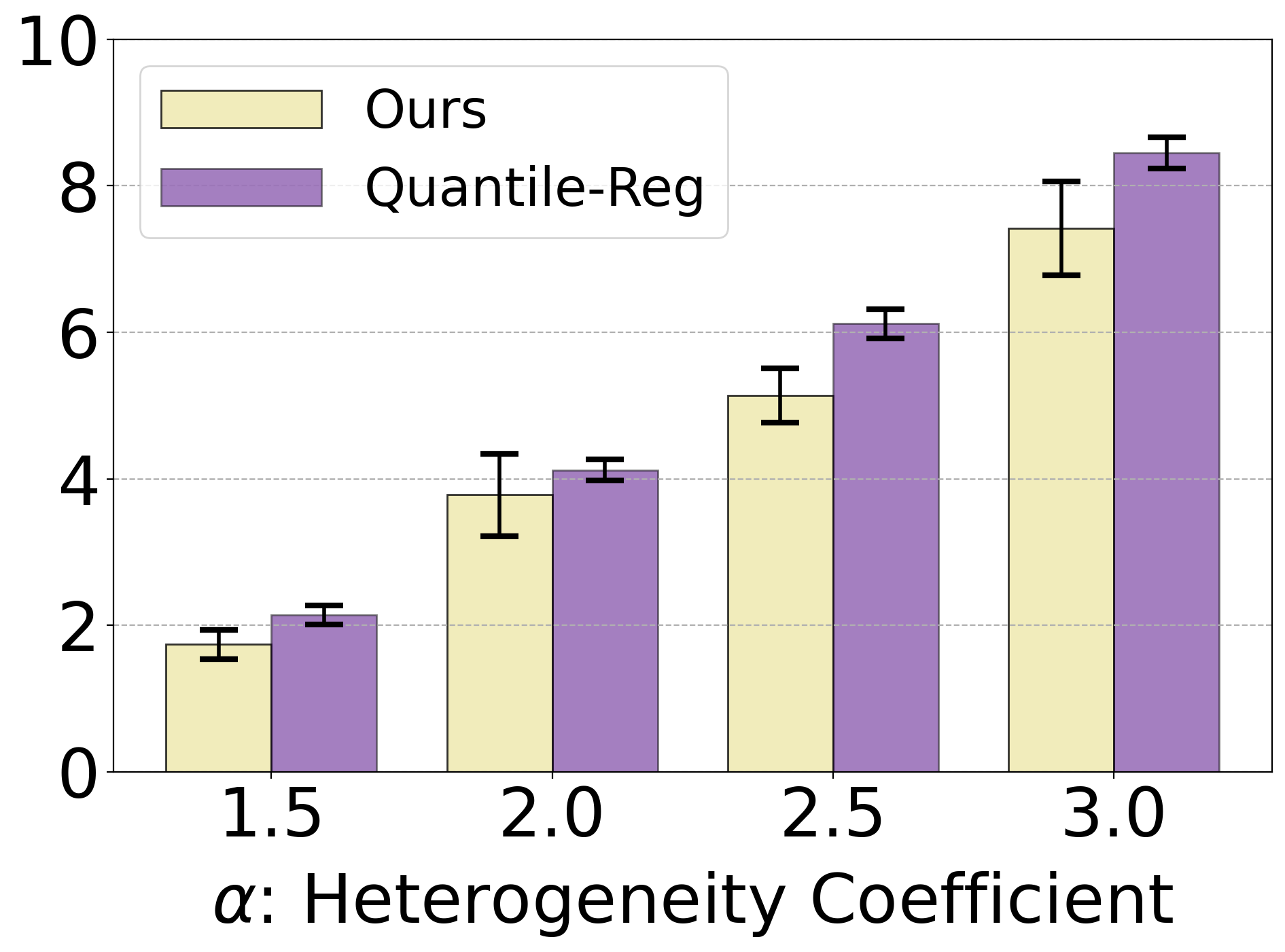}
\end{minipage}}%
\centering
\caption{Estimation performance of individual treatment effects under varying heterogeneity degrees.}
\label{fig:ratio}
\vspace{-12pt}
\end{figure*}

\begin{figure*}[t]
\centering
\subfloat[Sim-10 (In-sample)]{
\begin{minipage}[t]{0.245\linewidth}
\centering
\includegraphics[width=1\textwidth]{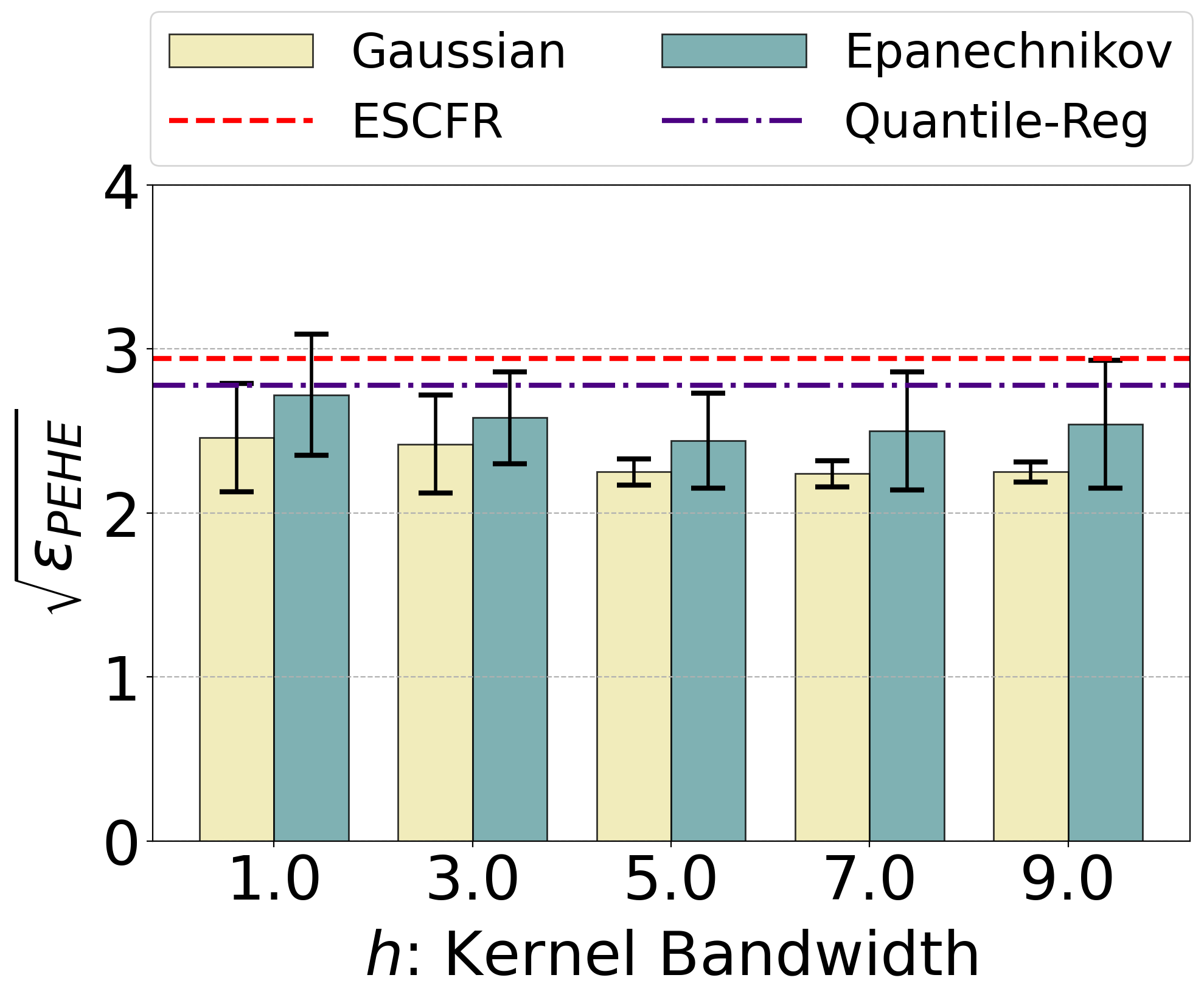}
\end{minipage}}%
\subfloat[Sim-10 (Out-sample)]{
\begin{minipage}[t]{0.245\linewidth}
\centering
\includegraphics[width=0.93\textwidth]{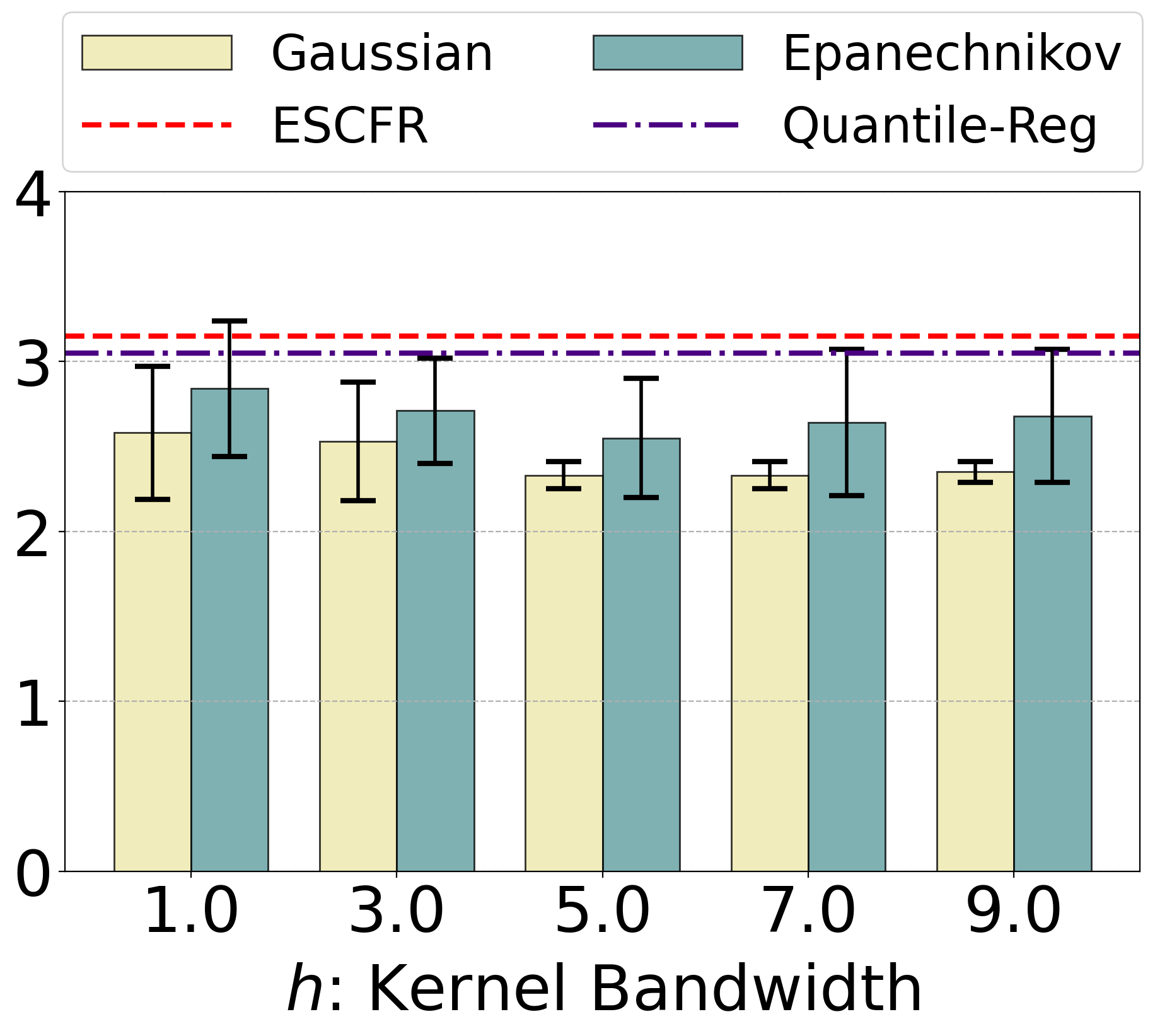}
\end{minipage}}%
\subfloat[Sim-40 (In-sample)]{
\begin{minipage}[t]{0.245\linewidth}
\centering
\includegraphics[width=0.93\textwidth]{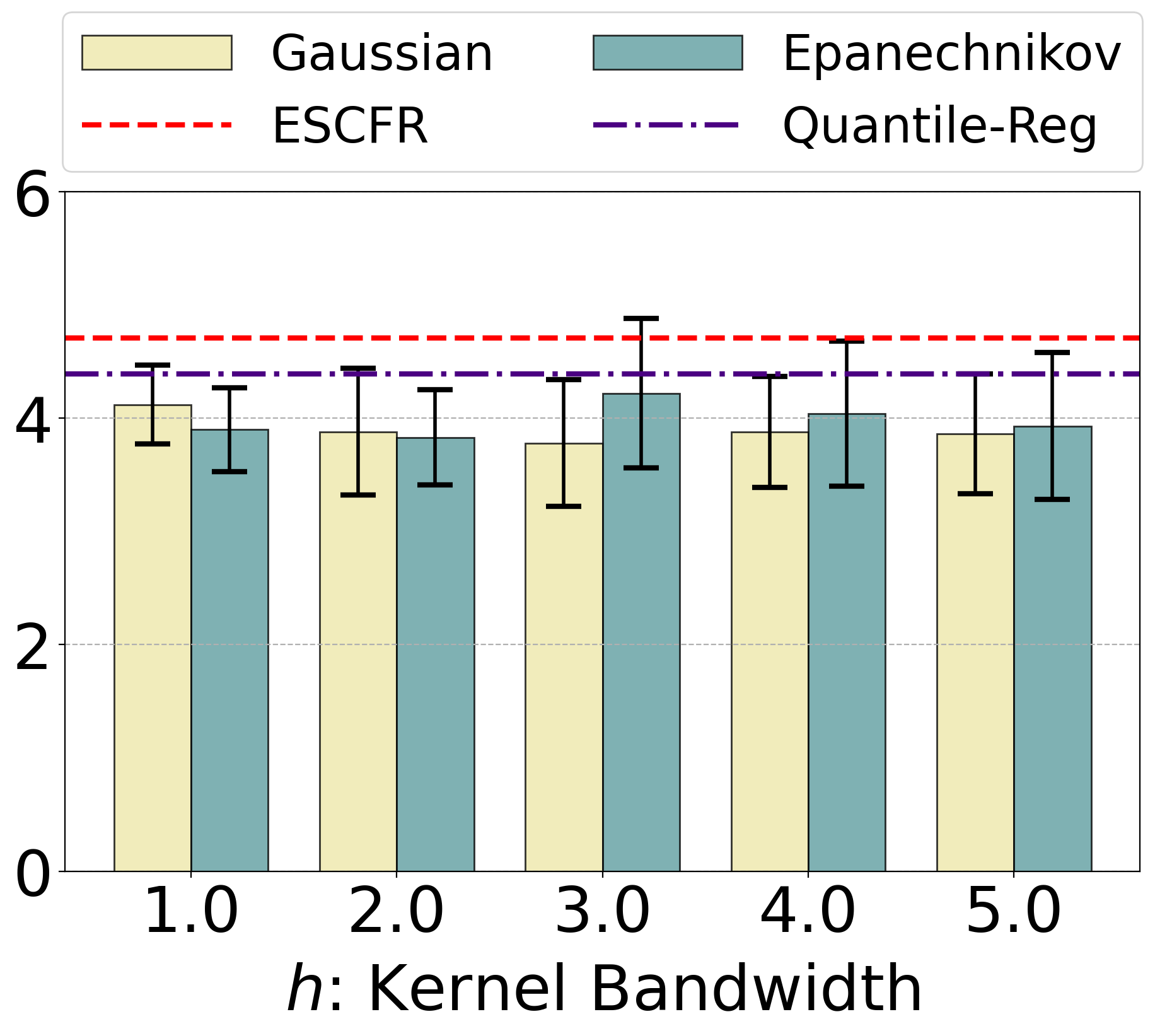}
\end{minipage}}%
\subfloat[Sim-40 (Out-sample)]{
\begin{minipage}[t]{0.245\linewidth}
\centering
\includegraphics[width=0.93\textwidth]{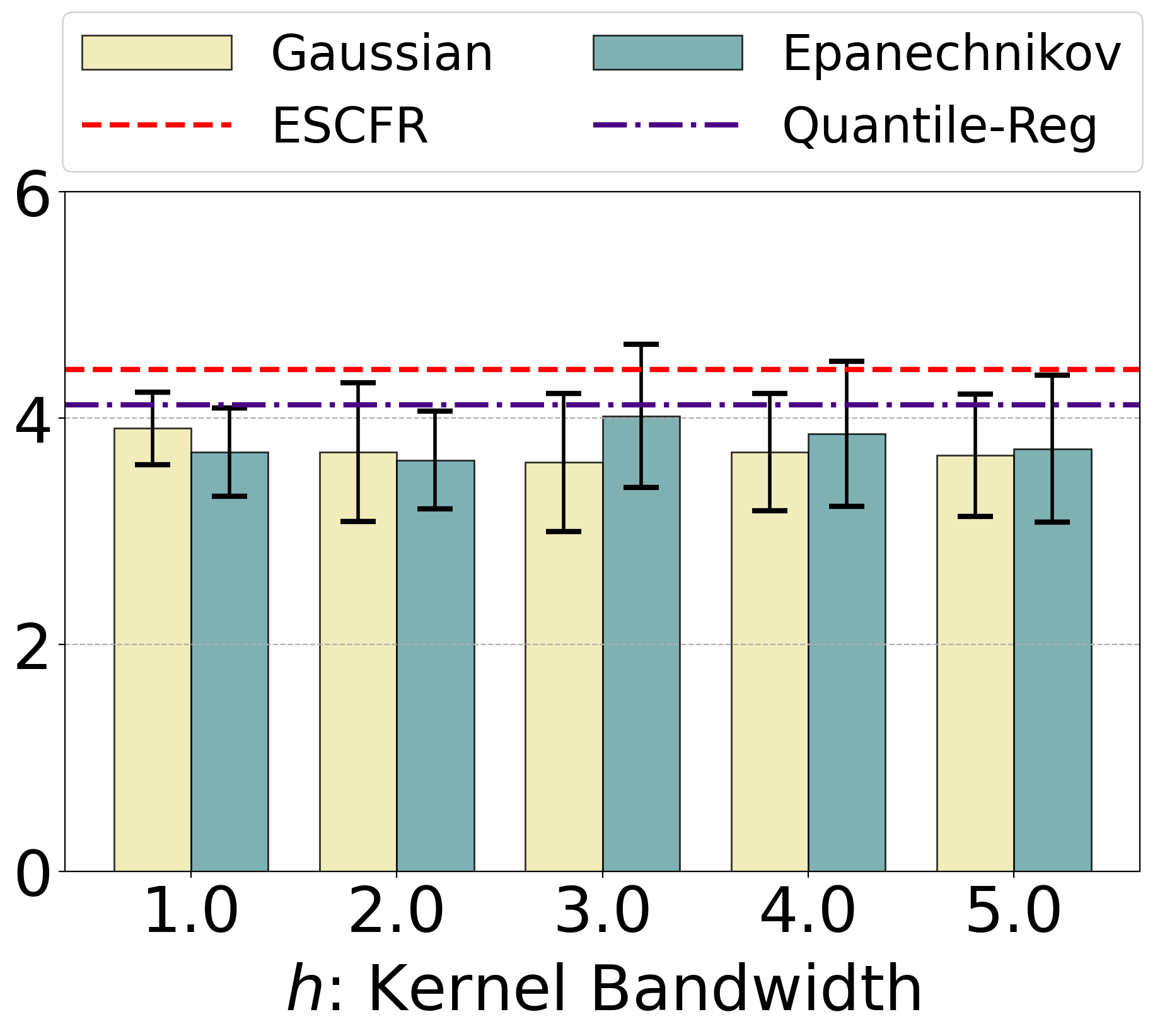}
\end{minipage}}%
\centering
\vspace{-3pt}
\caption{The estimation performance with different kernels and bandwidths.} 
\label{fig:kernel}
\vspace{-10pt}
\end{figure*}

\begin{table*}[t]
\centering
\caption{The experiment results on the \textsc{IHDP} dataset and \textsc{Jobs} dataset. The best result is bolded.}
\vspace{-6pt}
\resizebox{1\linewidth}{!}{
\begin{tabular}{l|llll|llll}
\toprule
          & \multicolumn{4}{c|}{\textsc{IHDP}}                                                                                & \multicolumn{4}{c}{\textsc{Jobs}}                                                                                \\ \cmidrule{2-9} 
          & \multicolumn{2}{c}{In-sample}                  & \multicolumn{2}{c|}{Out-sample}                  & \multicolumn{2}{c}{In-sample}                  & \multicolumn{2}{c}{Out-sample}                  \\ \midrule
Methods   & \multicolumn{1}{c}{$\sqrt{\epsilon_{\text{PEHE}}}$} & \multicolumn{1}{c}{$\epsilon_{\text{ATE}}$} & \multicolumn{1}{c}{$\sqrt{\epsilon_{\text{PEHE}}}$} & \multicolumn{1}{c|}{$\epsilon_{\text{ATE}}$} & \multicolumn{1}{c}{$R_{\text{Pol}}$} & \multicolumn{1}{c}{$\epsilon_{\text{ATT}}$} & \multicolumn{1}{c}{$R_{\text{Pol}}$} & \multicolumn{1}{c}{$\epsilon_{\text{ATT}}$} \\
\cmidrule{1-9}
T-learner & 1.49 $\pm$ 0.03                & 0.37 $\pm$ 0.05               & 1.81 $\pm$ 0.04                & 0.49 $\pm$ 0.04                & 0.31 $\pm$ 0.06                         & 0.16 $\pm$ 0.10                        &  0.27 $\pm$ 0.08                        & 0.20 $\pm$ 0.07                        \\
X-learner & 1.50 $\pm$ 0.02                & 0.21 $\pm$ 0.05               & 1.73 $\pm$ 0.03                & 0.36 $\pm$ 0.07                & 0.16 $\pm$ 0.04                         & 0.07 $\pm$ 0.05                        & 0.16 $\pm$ 0.03                         & 0.10 $\pm$ 0.09                        \\
BNN       & 2.09 $\pm$ 0.16                & 1.00 $\pm$ 0.23               & 2.37 $\pm$ 0.15                & 1.18 $\pm$ 0.19                & 0.15 $\pm$ 0.01                         &  0.08 $\pm$ 0.03                        & 0.16 $\pm$ 0.02                         & 0.13 $\pm$ 0.07                        \\
TARNet    & 1.52 $\pm$ 0.07                & 0.22 $\pm$ 0.13               & 1.78 $\pm$ 0.07                & 0.34 $\pm$ 0.18                & 0.17 $\pm$ 0.06                         & 0.06 $\pm$ 0.08                        & 0.18 $\pm$ 0.09                         & 0.10 $\pm$ 0.06                        \\
CFRNet    & 1.46 $\pm$ 0.06                & 0.17 $\pm$ 0.15               & 1.77 $\pm$ 0.06                & 0.32 $\pm$ 0.20                & 0.17 $\pm$ 0.03                         & \textbf{0.05 $\pm$ 0.03}                        & 0.19 $\pm$ 0.07                         & 0.10 $\pm$ 0.04                        \\
CEVAE    & 4.08 $\pm$ 0.88                & 3.67 $\pm$ 1.23               & 4.12 $\pm$ 0.91                & 3.75 $\pm$ 1.23                & 0.18 $\pm$ 0.05                         & 0.09 $\pm$ 0.03                        & 0.22 $\pm$ 0.08                        & 0.10 $\pm$ 0.09                        \\

DragonNet & 1.49 $\pm$ 0.08                & 0.22 $\pm$ 0.14               & 1.80 $\pm$ 0.06                & 0.29 $\pm$ 0.19                & 0.17 $\pm$ 0.06                         & 0.07 $\pm$ 0.07                        & 0.20 $\pm$ 0.08                         &  0.11 $\pm$ 0.09                       \\
DeRCFR    & 1.48 $\pm$ 0.06                & 0.25 $\pm$ 0.14               & 1.69 $\pm$ 0.06                & 0.25 $\pm$ 0.14                & 0.15 $\pm$ 0.02                         & 0.14 $\pm$ 0.04                        & 0.16 $\pm$ 0.04                         & 0.15 $\pm$ 0.11                        \\
DESCN     & 2.08 $\pm$ 0.98                & 0.74 $\pm$ 1.00               & 2.67 $\pm$ 1.45                & 1.04 $\pm$ 1.46                & 0.15 $\pm$ 0.02                         & 0.21 $\pm$ 0.14                        & 0.22 $\pm$ 0.16                         & 0.16 $\pm$ 0.04                        \\
ESCFR     & 1.46 $\pm$ 0.09                & 0.16 $\pm$ 0.16               & 1.73 $\pm$ 0.08                & 0.27 $\pm$ 0.16                & 0.14 $\pm$ 0.02                         & 0.10 $\pm$ 0.03                        & 0.15 $\pm$ 0.02                         & 0.10 $\pm$ 0.08                        \\
Quantile-Reg & 1.43 $\pm$ 0.05	& 0.14 $\pm$ 0.09	& 1.56 $\pm$ 0.03	& 0.18 $\pm$ 0.09 & 0.14 $\pm$ 0.01&	 0.06 $\pm$ 0.01	& 0.15 $\pm$ 0.01	& 0.07 $\pm$ 0.04 \\

CFQP & 1.47 $\pm$ 0.10	& 0.18 $\pm$ 0.17	& 1.48 $\pm$ 0.05	& 0.15 $\pm$ 0.08 & 0.15 $\pm$ 0.02 &	0.23 $\pm$ 0.15	& 0.16 $\pm$ 0.03	& 0.15 $\pm$ 0.07 \\

Ours      & \textbf{1.41 $\pm$ 0.02}                & \textbf{0.11 $\pm$ 0.10}               & \textbf{1.50 $\pm$ 0.06}                & \textbf{0.13 $\pm$ 0.08}                &  \textbf{0.08 $\pm$ 0.04}                         & 0.06 $\pm$ 0.02                        & \textbf{0.11 $\pm$ 0.05}                         & \textbf{0.05 $\pm$ 0.05}                        \\ 

\bottomrule
\end{tabular}}
\label{tab:real}
\vspace{-6pt}
\end{table*}

\subsection{Synthetic Experiment}
{\bf Simulation Process.} We generate the synthetic dataset by the following process. First, we sample the covariate $Z \sim \mathcal{N}\left(0, I_{m}\right)$ and the treatment $X \sim \operatorname{Bern}(\pi(Z))$, where $\operatorname{Bern}(\cdot)$ is the Bernoulli distribution with probability $ \pi(Z) = \mathbb{P}(X = 1 \mid Z) = \sigma(W_x \cdot Z)$, $\sigma(\cdot)$ is the sigmoid function, and $W_x \sim \operatorname{Unif}(-1, 1)^{m}$, $\operatorname{Unif}(\cdot)$ is the uniform distribution. Then, we sample the noise $U_0 \sim \mathcal{N}\left(0, 1\right)$ and $U_1 = \alpha \cdot U_0$ to consider the heterogeneity of the exogenous variables, where $\alpha$ is the hyper-parameter to control the heterogeneity degree. Finally, we simulate $ Y_1 = W_y \cdot Z + U_{1}$ and $Y_0 =  W_y \cdot Z/\alpha + U_{0}$ with $W_y \sim \mathcal{N}(0,I_{m})$. We generate 10,000 samples with 63/27/10 train/validation/test split and vary $m \in \{5, 10, 20, 40\}$ in our synthetic experiment.

{\bf Baselines and Evaluation Metrics.} 
The competing baselines includes: 
T-learner~\citep{kunzel2019metalearners}, X-learner~\citep{kunzel2019metalearners}, BNN~\citep{johansson2016learning}, TARNet~\citep{shalit2017estimating}, CFRNet~\citep{shalit2017estimating}, CEVAE~\citep{louizos2017causal}, DragonNet~\citep{shi2019adapting}, DeRCFR~\citep{wu2022learning}, DESCN~\citep{zhong2022descn}, ESCFR~\citep{wang2023optimal}, CFQP~\citep{Brouwer2022},
and Quantile-Reg~\citep{Xie-etal2023-attribution}. 
We evaluate the individual treatment effect estimation using the \emph{individual level Precision in Estimation
of Heterogeneous Effects} (PEHE): $$\epsilon_{\text{PEHE}} =
{\frac{1}{N} \sum_{i=1}^N [(\hat {Y}_i(1) - \hat {Y}_i(0)) - (Y_i(1) - Y_i(0))]^2},$$
where $\hat {Y}_i(1)$ and $\hat {Y}_i(0)$ are the predicted values for the corresponding true potential outcomes of unit $i$.  
 It is noteworthy that $\epsilon_{\text{PEHE}}$ is tailored for individual-level evaluation and counterfactual estimation, which is different from the common metric~\citep{shalit2017estimating} given by  
     $\frac{1}{N} \sum_{i=1}^N[  (\hat \mu_1(X_i) - \hat \mu_0(X_i)) - ( \mu_1(X_i) -  \mu_0(X_i)  )]^2,$
where  $\mu_1(X_i) -  \mu_0(X_i) := \bfE[ Y(1)|X ]- \bfE[ Y(0)|X ]$ are the true CATE, and $\hat \mu_1(X_i) - \hat \mu_0(X_i)$ is its estimate. 
Both in-sample and out-of-sample performances are reported in our experiments. For more implementation details of the proposed method, please refer to 
 Appendix \ref{app-e}.  

{\bf Performance Analysis.} The results of estimation performance are shown in Table \ref{tab:sim}. Our method stably outperforms all baselines with varying covariate dimensions $m$, demonstrating the effectiveness of the proposed method. In addition, we investigate our method performance with violated assumptions on rank and uncorrelated covariates. Specifically, we modified the data generation process to explore the performance of our method under correlated covariates. Specifically, we sample the covariate $Z \sim \mathcal{N}\left(0, \Sigma_{m}\right)$, where the $\rho_{ij}$ in $\Sigma_m$ is $\max(0.01, \rho^{|i-j|})$. The results are shown in Table \ref{tab:sim_cov}. The results show that our method still outperforms the baseline methods. See Appendix \ref{app-e} for the results with rank assumption violated. Moreover, we further explore the effect of heterogeneity degrees on the performance of the proposed method, as shown in Figure \ref{fig:ratio}, from which one can see that as the heterogeneity degree increases, our method stably outperforms the Quantile-Reg in terms of PEHE. Finally, we examine the effect of different kernels and bandwidths, as shown in Figure \ref{fig:kernel}, our method stably outperforms the Quantile-Reg and ESCFR methods with different kernels and bandwidths.


\subsection{Real-World Experiment}

{\bf Dataset and Preprocessing.} Following previous studies~\cite{shalit2017estimating, louizos2017causal, yoon2018ganite, yao2018representation}, we conduct experiments on semi-synthetic dataset \textsc{IHDP} and real-world dataset \textsc{Jobs}. The \textsc{IHDP} dataset~\citep{hill2011bayesian} is constructed from the Infant Health and Development Program (IHDP) with 747 individuals and 25 covariates. The \textsc{Jobs} dataset~\citep{lalonde1986evaluating} is based on the National Supported Work program with 3,212 individuals and 17 covariates. We follow \cite{shalit2017estimating} to split the data into training/validation/testing set with ratios 63/27/10 and 56/24/20 with 100 and 10 repeated times on the \textsc{IHDP} and the \textsc{Jobs} datasets, respectively.


{\bf Evaluation Metrics.} Following previous studies~\citep{shalit2017estimating, louizos2017causal, yao2018representation}, besides $\epsilon_{\text{PEHE}}$, we also use the absolute error in \emph{Average Treatment Effect} (ATE) for evaluation, which is defined as $\epsilon_{\text{ATE}}=\frac{1}{N}| \sum_{i=1}^N ((\hat {Y}_i(1) - \hat {Y}_i(0)) - (Y_i(1) - Y_i(0)))|.$
We use $\sqrt{\epsilon_{\text{PEHE}}}$ and $\epsilon_{\text{ATE}}$ to evaluate performance on the \textsc{IHDP} dataset. For the \textsc{Jobs} dataset, since one of the potential outcomes is not available, we evaluate the performance using the absolute error in \emph{Average Treatment effect on the Treated} (ATT) as $\epsilon_{\text{ATT}} = |\text{ATT} - \frac{1}{|T|} \sum_{i \in T} (\hat {Y}_i(1) - \hat {Y}_i(0)|$
with $\text{ATT} = |\frac{1}{|T|} \sum_{i \in T} {Y}_i - \frac{1}{|C \cap E|} \sum_{i \in C \cap E} {Y}_i|$. We also use the policy risk ${R}_{\text{Pol}}= 1-(\mathbb{E}[Y(1) \mid \hat {Y}(1) - \hat {Y}(0) >0, X=1] \cdot \mathbb{P}(\hat {Y}(1) - \hat {Y}(0)>0)+\mathbb{E}[Y(0) \mid \hat {Y}(1) - \hat {Y}(0) \leq 0, X=0] \cdot \mathbb{P}(\hat {Y}(1) - \hat {Y}(0) \leq 0))$, where $T, C, E$ are the indexes of  treatment sample set, control sample set, and randomized sample set, respectively. 

{\bf Performance Comparison.}
The experiment results are shown in Table \ref{tab:real}. Similar to the synthetic experiment, the Quantile-Reg method still achieves the most competitive performance compared to the other baselines. Our method stably outperforms all the baselines on both the semi-synthetic dataset \textsc{IHDP} and the real-world dataset \textsc{Jobs}, especially in the out-sample scenario. This provides the empirical evidence of the effectiveness of our method.

\section{Related Work}
{\bf Conditional Average Treatment Effect (CATE).} 
 CATE also referred to as heterogeneous treatment effect, represents the average treatment effects on subgroups categorized by covariate values, and plays a central role in areas such as precision medicine~\cite{Kosorok+Laber:2019} and policy learning~\cite{Dudik2011-ICML}. Benefiting from recent advances in machine learning, many methods have been proposed for estimating CATE, including matching methods~\cite{rosenbaum1983central,schwab2018perfect,yao2018representation}, tree-based methods~\cite{chipman2010bart,wager2018estimation}, representation learning methods~\cite{johansson2016learning,shalit2017estimating,shi2019adapting,wu2022learning,wang2023optimal}, and generative methods~\cite{louizos2017causal,yoon2018ganite}. Unlike the existing work devoted to estimating CATE at the intervention level for subgroups, our work focuses on counterfactual inference at the more challenging and fine-grained individual level.

{\bf Counterfactual Inference.} Counterfactual inference involves the identification and estimation of counterfactual outcomes.
  For identification,  \cite{Shpitser2007-UAI} provided an algorithm  leveraging counterfactual graphs to identify counterfactual queries. In addition, \cite{Correa2021-NIPS} discussed the identifiability of nested counterfactuals within a given causal graph. 
 More relevant to our work, 
   \cite{Xie-etal2023-attribution} and \cite{Lu-etal2020-attribution} studied the identifiability assumptions in the setting of backdoor criterion under homogeneity and
strict monotonicity assumptions. 
Several methods focus on determining its bounds with less stringent assumptions, such as \cite{Tian-Wu2025, Wu-etal-2024-Harm, Bakle-1994-UAI, Tian2000}. 
In addition, \citep{Wu-Mao2025} proposed a method for identifying the joint distribution of potential outcomes using multiple experimental datasets. 

 For estimation, \cite{Pearl-etal2016-primer}  introduced a three-step procedure for counterfactual inference. Many machine learning methods estimate counterfactual outcomes in this framework, such as  \cite{Mesnard2021-ICML, Brouwer2022, Lu-etal2020-attribution, Nasr-Esfahany-2023-attribution, Shah2022, Yan2023, Chao-etal2023}. 
 Recently, \cite{Xie-etal2023-attribution} employed quantile regression to estimate the counterfactual outcomes,  effectively circumventing the need for SCM estimation. 
 In our work, we extend the above methods in both identification and estimation.  
Recently, counterfactual inference methods have been extensively applied across various application scenarios,
   such as counterfactual fairness~\citep{kusner2017counterfactual, Zuo2023, Anthis2023-NIPS, Kavouras2023-NIPS, Chen2023-NIPS, Jin-etal2023}, policy evaluation and improvement~\citep{Wu-etal-2025-Safe, Wu-etal-2025-Compare, Wu-etal-ShortLong, yang2024learning}, reinforcement learning~\citep{Lu-etal2020-attribution, Tsirtsis2023-NIPS, Liu2023-NIPS, Shao2023-NIPS, Meulemans-etal-NIPS23, Haugh-ICML23, Zenati-ICML23}, imitation learning~\citep{Sun2023-NIPS, Zeng-etal2025-imitation}, counterfactual generation~\citep{Yan2023, Prabhu2023-NIPS, Feder2023-NIPS, Ribeiro-ICML23}, counterfactual explanation~\citep{Kenny2023-NIPS, Raman2023-NIPS, Hamman-ICML23, Wu-eatl-ICML23, Ley-ICML23}, counterfactual harm~\citep{Wu-etal-2024-Harm, Wu-etal-2025-Safe,2022counterfactualharm, li2023trustworthy}, physical audiovisual commonsense reasoning~\citep{Lv2023-NIPS}, interpretable time series prediction~\citep{Yan-etal-ICML23}, classification and detection in medical imaging~\citep{Fontanella-ICML23}, data valuation~\citep{Liu-ICML23}, etc.  
   Therefore, developing novel counterfactual inference methods holds significant practical implications.

\vspace{-2pt}
\section{Conclusion}

This work addresses the fundamental challenge of counterfactual inference in the absence of a known SCM and under heterogeneous endogenous variables. 
 We first introduce the rank preservation assumption to identify counterfactual outcomes, showing that it is slightly weaker than the homogeneity and monotonicity assumptions. 
 Then, we propose a novel ideal loss for unbiased learning of counterfactual outcomes and develop a kernel-based estimator for practical implementation. The convexity of the ideal loss and the unbiased nature of the proposed estimator contribute to the robustness and reliability of our method. 
 A potential limitation arises when the propensity score is extremely small in certain data sparsity scenarios, which may cause instability in the estimation method. Further investigation is warranted to address and overcome this challenge.  

\begin{ack}
This research was supported 
 the National Natural Science Foundation of China (No. 12301370),  
the BTBU Digital Business Platform Project by BMEC, and 
 the Beijing Key Laboratory of Applied Statistics and Digital Regulation.  
\end{ack}

\bibliographystyle{unsrt}
\bibliography{refs.bib}


\newpage
\section*{NeurIPS Paper Checklist}

\begin{enumerate}
\item {\bf Claims}
    \item[] Question: Do the main claims made in the abstract and introduction accurately reflect the paper's contributions and scope?
    \item[] Answer:
    \answerYes{} 
    \item[] Justification: We provide main claims and contributions in abstract and introduction.
    \item[] Guidelines:
    \begin{itemize}
        \item The answer NA means that the abstract and introduction do not include the claims made in the paper.
        \item The abstract and/or introduction should clearly state the claims made, including the contributions made in the paper and important assumptions and limitations. A No or NA answer to this question will not be perceived well by the reviewers. 
        \item The claims made should match theoretical and experimental results, and reflect how much the results can be expected to generalize to other settings. 
        \item It is fine to include aspirational goals as motivation as long as it is clear that these goals are not attained by the paper. 
    \end{itemize}

\item {\bf Limitations}
    \item[] Question: Does the paper discuss the limitations of the work performed by the authors?
    \item[] Answer:  \answerYes{} 
    \item[] Justification: We discuss the limitations of the work in Conclusion. 
    \item[] Guidelines:
    \begin{itemize}
        \item The answer NA means that the paper has no limitation while the answer No means that the paper has limitations, but those are not discussed in the paper. 
        \item The authors are encouraged to create a separate "Limitations" section in their paper.
        \item The paper should point out any strong assumptions and how robust the results are to violations of these assumptions (e.g., independence assumptions, noiseless settings, model well-specification, asymptotic approximations only holding locally). The authors should reflect on how these assumptions might be violated in practice and what the implications would be.
        \item The authors should reflect on the scope of the claims made, e.g., if the approach was only tested on a few datasets or with a few runs. In general, empirical results often depend on implicit assumptions, which should be articulated.
        \item The authors should reflect on the factors that influence the performance of the approach. For example, a facial recognition algorithm may perform poorly when image resolution is low or images are taken in low lighting. Or a speech-to-text system might not be used reliably to provide closed captions for online lectures because it fails to handle technical jargon.
        \item The authors should discuss the computational efficiency of the proposed algorithms and how they scale with dataset size.
        \item If applicable, the authors should discuss possible limitations of their approach to address problems of privacy and fairness.
        \item While the authors might fear that complete honesty about limitations might be used by reviewers as grounds for rejection, a worse outcome might be that reviewers discover limitations that aren't acknowledged in the paper. The authors should use their best judgment and recognize that individual actions in favor of transparency play an important role in developing norms that preserve the integrity of the community. Reviewers will be specifically instructed to not penalize honesty concerning limitations.
    \end{itemize}

\item {\bf Theory assumptions and proofs}
    \item[] Question: For each theoretical result, does the paper provide the full set of assumptions and a complete (and correct) proof?
    \item[] Answer: \answerYes{} 
    \item[] Justification: We provide the full set of assumptions in Sections \ref{sec5} and \ref{sec6}, and the complete proofs in Appendices \ref{app-a} and \ref{app-b}.
    \item[] Guidelines: 
    \begin{itemize}
        \item The answer NA means that the paper does not include theoretical results. 
        \item All the theorems, formulas, and proofs in the paper should be numbered and cross-referenced.
        \item All assumptions should be clearly stated or referenced in the statement of any theorems.
        \item The proofs can either appear in the main paper or the supplemental material, but if they appear in the supplemental material, the authors are encouraged to provide a short proof sketch to provide intuition. 
        \item Inversely, any informal proof provided in the core of the paper should be complemented by formal proofs provided in appendix or supplemental material.
        \item Theorems and Lemmas that the proof relies upon should be properly referenced. 
    \end{itemize}

    \item {\bf Experimental result reproducibility}
    \item[] Question: Does the paper fully disclose all the information needed to reproduce the main experimental results of the paper to the extent that it affects the main claims and/or conclusions of the paper (regardless of whether the code and data are provided or not)?
    \item[] Answer: \answerYes{}
    \item[] Justification: We provide a detailed description of the experimental process in Section \ref{sec7} and  Appendix \ref{app-e}.
    \item[] Guidelines: 
    \begin{itemize}
        \item The answer NA means that the paper does not include experiments.
        \item If the paper includes experiments, a No answer to this question will not be perceived well by the reviewers: Making the paper reproducible is important, regardless of whether the code and data are provided or not.
        \item If the contribution is a dataset and/or model, the authors should describe the steps taken to make their results reproducible or verifiable. 
        \item Depending on the contribution, reproducibility can be accomplished in various ways. For example, if the contribution is a novel architecture, describing the architecture fully might suffice, or if the contribution is a specific model and empirical evaluation, it may be necessary to either make it possible for others to replicate the model with the same dataset, or provide access to the model. In general. releasing code and data is often one good way to accomplish this, but reproducibility can also be provided via detailed instructions for how to replicate the results, access to a hosted model (e.g., in the case of a large language model), releasing of a model checkpoint, or other means that are appropriate to the research performed.
        \item While NeurIPS does not require releasing code, the conference does require all submissions to provide some reasonable avenue for reproducibility, which may depend on the nature of the contribution. For example
        \begin{enumerate}
            \item If the contribution is primarily a new algorithm, the paper should make it clear how to reproduce that algorithm.
            \item If the contribution is primarily a new model architecture, the paper should describe the architecture clearly and fully.
            \item If the contribution is a new model (e.g., a large language model), then there should either be a way to access this model for reproducing the results or a way to reproduce the model (e.g., with an open-source dataset or instructions for how to construct the dataset).
            \item We recognize that reproducibility may be tricky in some cases, in which case authors are welcome to describe the particular way they provide for reproducibility. In the case of closed-source models, it may be that access to the model is limited in some way (e.g., to registered users), but it should be possible for other researchers to have some path to reproducing or verifying the results.
        \end{enumerate}
    \end{itemize}

\item {\bf Open access to data and code}
    \item[] Question: Does the paper provide open access to the data and code, with sufficient instructions to faithfully reproduce the main experimental results, as described in supplemental material?
    \item[] Answer: \answerYes{} 
    \item[] Justification: We share the data and code in the supplementary material. 
    \item[] Guidelines:
    \begin{itemize}
        \item The answer NA means that paper does not include experiments requiring code.
        \item Please see the NeurIPS code and data submission guidelines (\url{https://nips.cc/public/guides/CodeSubmissionPolicy}) for more details.
        \item While we encourage the release of code and data, we understand that this might not be possible, so “No” is an acceptable answer. Papers cannot be rejected simply for not including code, unless this is central to the contribution (e.g., for a new open-source benchmark).
        \item The instructions should contain the exact command and environment needed to run to reproduce the results. See the NeurIPS code and data submission guidelines (\url{https://nips.cc/public/guides/CodeSubmissionPolicy}) for more details.
        \item The authors should provide instructions on data access and preparation, including how to access the raw data, preprocessed data, intermediate data, and generated data, etc.
        \item The authors should provide scripts to reproduce all experimental results for the new proposed method and baselines. If only a subset of experiments are reproducible, they should state which ones are omitted from the script and why.
        \item At submission time, to preserve anonymity, the authors should release anonymized versions (if applicable).
        \item Providing as much information as possible in supplemental material (appended to the paper) is recommended, but including URLs to data and code is permitted.
    \end{itemize}

\item {\bf Experimental setting/details}
    \item[] Question: Does the paper specify all the training and test details (e.g., data splits, hyperparameters, how they were chosen, type of optimizer, etc.) necessary to understand the results?
    \item[] Answer: \answerYes{} 
    \item[] Justification: We provide the experimental setting and details. See details in Section \ref{sec7} and Appendix \ref{app-e}. 
    \item[] Guidelines:
    \begin{itemize}
        \item The answer NA means that the paper does not include experiments.
        \item The experimental setting should be presented in the core of the paper to a level of detail that is necessary to appreciate the results and make sense of them.
        \item The full details can be provided either with the code, in appendix, or as supplemental material.
    \end{itemize}

\item {\bf Experiment statistical significance}
    \item[] Question: Does the paper report error bars suitably and correctly defined or other appropriate information about the statistical significance of the experiments?
    \item[] Answer: \answerYes{}
    \item[] Justification: We report error bars and standard deviations in the main comparative experiments.
    \item[] Guidelines:
    \begin{itemize}
        \item The answer NA means that the paper does not include experiments.
        \item The authors should answer "Yes" if the results are accompanied by error bars, confidence intervals, or statistical significance tests, at least for the experiments that support the main claims of the paper.
        \item The factors of variability that the error bars are capturing should be clearly stated (for example, train/test split, initialization, random drawing of some parameter, or overall run with given experimental conditions).
        \item The method for calculating the error bars should be explained (closed form formula, call to a library function, bootstrap, etc.)
        \item The assumptions made should be given (e.g., Normally distributed errors).
        \item It should be clear whether the error bar is the standard deviation or the standard error of the mean.
        \item It is OK to report 1-sigma error bars, but one should state it. The authors should preferably report a 2-sigma error bar than state that they have a 96\% CI, if the hypothesis of Normality of errors is not verified.
        \item For asymmetric distributions, the authors should be careful not to show in tables or figures symmetric error bars that would yield results that are out of range (e.g. negative error rates).
        \item If error bars are reported in tables or plots, The authors should explain in the text how they were calculated and reference the corresponding figures or tables in the text.
    \end{itemize}

\item {\bf Experiments compute resources}
    \item[] Question: For each experiment, does the paper provide sufficient information on the computer resources (type of compute workers, memory, time of execution) needed to reproduce the experiments?
    \item[] Answer: \answerYes{} 
    \item[] Justification: We provide sufficient information on the computer resources in Appendix \ref{app-e}.
    \item[] Guidelines:
    \begin{itemize}
        \item The answer NA means that the paper does not include experiments.
        \item The paper should indicate the type of compute workers CPU or GPU, internal cluster, or cloud provider, including relevant memory and storage.
        \item The paper should provide the amount of compute required for each of the individual experimental runs as well as estimate the total compute. 
        \item The paper should disclose whether the full research project required more compute than the experiments reported in the paper (e.g., preliminary or failed experiments that didn't make it into the paper). 
    \end{itemize}

\item {\bf Code of ethics}
    \item[] Question: Does the research conducted in the paper conform, in every respect, with the NeurIPS Code of Ethics \url{https://neurips.cc/public/EthicsGuidelines}?
    \item[] Answer: \answerYes{} 
    \item[] Justification: We have reviewed the NeurIPS Code of Ethics and our paper conforms with it. 
    \item[] Guidelines:
    \begin{itemize}
        \item The answer NA means that the authors have not reviewed the NeurIPS Code of Ethics.
        \item If the authors answer No, they should explain the special circumstances that require a deviation from the Code of Ethics.
        \item The authors should make sure to preserve anonymity (e.g., if there is a special consideration due to laws or regulations in their jurisdiction).
    \end{itemize}

\item {\bf Broader impacts}
    \item[] Question: Does the paper discuss both potential positive societal impacts and negative societal impacts of the work performed?
    \item[] Answer: \answerNA{} 
    \item[] Justification: There is no societal impact of the work performed. 
    \item[] Guidelines:
    \begin{itemize}
        \item The answer NA means that there is no societal impact of the work performed.
        \item If the authors answer NA or No, they should explain why their work has no societal impact or why the paper does not address societal impact.
        \item Examples of negative societal impacts include potential malicious or unintended uses (e.g., disinformation, generating fake profiles, surveillance), fairness considerations (e.g., deployment of technologies that could make decisions that unfairly impact specific groups), privacy considerations, and security considerations.
        \item The conference expects that many papers will be foundational research and not tied to particular applications, let alone deployments. However, if there is a direct path to any negative applications, the authors should point it out. For example, it is legitimate to point out that an improvement in the quality of generative models could be used to generate deepfakes for disinformation. On the other hand, it is not needed to point out that a generic algorithm for optimizing neural networks could enable people to train models that generate Deepfakes faster.
        \item The authors should consider possible harms that could arise when the technology is being used as intended and functioning correctly, harms that could arise when the technology is being used as intended but gives incorrect results, and harms following from (intentional or unintentional) misuse of the technology.
        \item If there are negative societal impacts, the authors could also discuss possible mitigation strategies (e.g., gated release of models, providing defenses in addition to attacks, mechanisms for monitoring misuse, mechanisms to monitor how a system learns from feedback over time, improving the efficiency and accessibility of ML).
    \end{itemize}
    
\item {\bf Safeguards}
    \item[] Question: Does the paper describe safeguards that have been put in place for responsible release of data or models that have a high risk for misuse (e.g., pretrained language models, image generators, or scraped datasets)?
    \item[] Answer: \answerNA{} 
    \item[] Justification: Our research does not have such risks.
    \item[] Guidelines:
    \begin{itemize}
        \item The answer NA means that the paper poses no such risks.
        \item Released models that have a high risk for misuse or dual-use should be released with necessary safeguards to allow for controlled use of the model, for example by requiring that users adhere to usage guidelines or restrictions to access the model or implementing safety filters. 
        \item Datasets that have been scraped from the Internet could pose safety risks. The authors should describe how they avoided releasing unsafe images.
        \item We recognize that providing effective safeguards is challenging, and many papers do not require this, but we encourage authors to take this into account and make a best faith effort.
    \end{itemize}

\item {\bf Licenses for existing assets}
    \item[] Question: Are the creators or original owners of assets (e.g., code, data, models), used in the paper, properly credited and are the license and terms of use explicitly mentioned and properly respected?
    \item[] Answer: \answerYes{} 
    \item[] Justification: The assets used have been properly noted and credited. 
    \item[] Guidelines:
    \begin{itemize}
        \item The answer NA means that the paper does not use existing assets.
        \item The authors should cite the original paper that produced the code package or dataset.
        \item The authors should state which version of the asset is used and, if possible, include a URL.
        \item The name of the license (e.g., CC-BY 4.0) should be included for each asset.
        \item For scraped data from a particular source (e.g., website), the copyright and terms of service of that source should be provided.
        \item If assets are released, the license, copyright information, and terms of use in the package should be provided. For popular datasets, \url{paperswithcode.com/datasets} has curated licenses for some datasets. Their licensing guide can help determine the license of a dataset.
        \item For existing datasets that are re-packaged, both the original license and the license of the derived asset (if it has changed) should be provided.
        \item If this information is not available online, the authors are encouraged to reach out to the asset's creators.
    \end{itemize}

\item {\bf New assets}
    \item[] Question: Are new assets introduced in the paper well documented and is the documentation provided alongside the assets?
    \item[] Answer: \answerNA{} 
    \item[] Justification: No new assets are being released.
    \item[] Guidelines:
    \begin{itemize}
        \item The answer NA means that the paper does not release new assets.
        \item Researchers should communicate the details of the dataset/code/model as part of their submissions via structured templates. This includes details about training, license, limitations, etc. 
        \item The paper should discuss whether and how consent was obtained from people whose asset is used.
        \item At submission time, remember to anonymize your assets (if applicable). You can either create an anonymized URL or include an anonymized zip file.
    \end{itemize}

\item {\bf Crowdsourcing and research with human subjects}
    \item[] Question: For crowdsourcing experiments and research with human subjects, does the paper include the full text of instructions given to participants and screenshots, if applicable, as well as details about compensation (if any)? 
    \item[] Answer: \answerNA{}
    \item[] Justification: We do not have any studies or results regarding crowdsourcing experiments and human subjects.
    \item[] Guidelines:
    \begin{itemize}
        \item The answer NA means that the paper does not involve crowdsourcing nor research with human subjects.
        \item Including this information in the supplemental material is fine, but if the main contribution of the paper involves human subjects, then as much detail as possible should be included in the main paper. 
        \item According to the NeurIPS Code of Ethics, workers involved in data collection, curation, or other labor should be paid at least the minimum wage in the country of the data collector. 
    \end{itemize}

\item {\bf Institutional review board (IRB) approvals or equivalent for research with human subjects}
    \item[] Question: Does the paper describe potential risks incurred by study participants, whether such risks were disclosed to the subjects, and whether Institutional Review Board (IRB) approvals (or an equivalent approval/review based on the requirements of your country or institution) were obtained?
    \item[] Answer: \answerNA{}
    \item[] Justification: We do not have any studies or results including study participants. 
    \item[] Guidelines:
    \begin{itemize}
        \item The answer NA means that the paper does not involve crowdsourcing nor research with human subjects.
        \item Depending on the country in which research is conducted, IRB approval (or equivalent) may be required for any human subjects research. If you obtained IRB approval, you should clearly state this in the paper. 
        \item We recognize that the procedures for this may vary significantly between institutions and locations, and we expect authors to adhere to the NeurIPS Code of Ethics and the guidelines for their institution. 
        \item For initial submissions, do not include any information that would break anonymity (if applicable), such as the institution conducting the review.
    \end{itemize}

\item {\bf Declaration of LLM usage}
    \item[] Question: Does the paper describe the usage of LLMs if it is an important, original, or non-standard component of the core methods in this research? Note that if the LLM is used only for writing, editing, or formatting purposes and does not impact the core methodology, scientific rigorousness, or originality of the research, declaration is not required.
    \item[] Answer: \answerNA{}. 
    \item[] Justification: The core methodological development in this research does not involve large language models (LLMs) as essential, original, or non-standard components. 
    \item[] Guidelines: 
    \begin{itemize}
        \item The answer NA means that the core method development in this research does not involve LLMs as any important, original, or non-standard components.
        \item Please refer to our LLM policy (\url{https://neurips.cc/Conferences/2025/LLM}) for what should or should not be described.
    \end{itemize}

\end{enumerate}


\appendix

\newpage 


\section{Proofs in Sections \ref{sec4} and \ref{sec5}}  \label{app-a}

One can show Lemma \ref{lem1} by a similar argument of the proof of  Theorem 1 in \cite{Xie-etal2023-attribution}. 
For the sake of self-containedness, we provide a novel proof of it.   
 
{\bf Lemma \ref{lem1}} \emph{Under Assumptions \ref{assump1}-\ref{assump2}, $y_{x'}$is identifiable.} 
\begin{proof}[Proof of Lemma \ref{lem1}]  

First, the distributions $\P(Y_x| Z=z) $ and $\P(Y_{x'}| Z=z)$ can be identified as $\P(Y | X=x, Z=z)$ and $ \P(Y | X=x', Z=z)$, respectively, by the backdoor criterion (i.e., $(Y_x, Y_{x'}) \indep X | Z$) of the setting.  

Then, according to the model (\ref{eq1}), we can equivalently write 
	 \[ Y_x = f_Y(x, z, U_{x}), ~  Y_{x'} = f_Y(x', z, U_{x'}), \]
and $Y$ and $U_X$ in model (\ref{eq1}) can be expressed as $Y = \sum_{x\in \mathcal{X}}  \mathbb{I}(X =x) \cdot Y_x$ and $U_X = \sum_{x\in \mathcal{X}}  \mathbb{I}(X =x) \cdot U_{x}$, where $\mathcal{X}$ is the support set of $X$ and $\mathbb{I}(\cdot)$ is an indicator function.   
 Assumption \ref{assump1} implies that $U_X = U_{x} = U_{x'}$ conditional on $Z$, i.e., $Y_x = f_Y(x, z, U_{X}), ~  Y_{x'} = f_Y(x', z, U_{X})$. 

Finally, for the individual with observation $(X =x, Z=z, Y= y)$, we denote $(y_x, y_{x'})$ as the true values of  ($Y_x, Y_{x'}$) for this individual. 
 For this individual, we can identify the quantile of $y_x$ in the distribution of $\P(Y_x | Z=z) = \P(Y | X=x, Z=z)$, denoted by $\tau^*$. Let $u_{\tau^*}$ be the true value of $U_{X}$ for this individual, it is the $\tau^*$-quantile in the distribution $\P(U_X|Z=z)$, then we have 
      \begin{align*}
       \tau^* ={}& \P( Y_x \leq y_x  | Z=z)     \qquad \qquad \qquad \text{(by the definition of $\tau$)} \\
	     ={}& \P(  U_{x} \leq  u_{\tau}   | Z=z )  \qquad \qquad \quad  ~ \text{(by Assumption \ref{assump2})} \\
	     ={}& \P(  U_{x'} \leq  u_{\tau}   | Z=z )  \qquad \qquad ~~~   \text{(by Assumption \ref{assump1})} \\
	     ={}& \P(  Y_{x'} \leq f_Y(x', z, u_{\tau^*})  | Z=z )  \quad ~ \text{(by Assumption \ref{assump2})} \\  
	     ={}& \P(  Y_{x'} \leq y_{x'}  | Z=z )    \qquad \qquad \quad  \text{(by the definition of $y_{x'}$)},
      \end{align*} 	 
  which implies that  for this individual,  its rankings of $y_{x}$ and $y_{x'}$ are the same in the distributions of $\P(Y_x | Z=z)$ and $\P( Y_{x'} | Z=z )$, resepcctively.  Thus, $y_{x'}$is identified as the $\tau^*$-quantile of the distribution   $\P( Y_{x'} | Z=z ) = \P(Y| X=x', Z=z)$. 
  


\end{proof}

{\bf Proposition \ref{prop1}}  \emph{   Under Assumption \ref{assump3}, $y_{x'}$ is identified as 
   the $\tau^*$-th quantile of  
    $\P(Y| X=x', Z=z)$, where $\tau^*$ is the quantile of $y$ in the distribution of $\P(Y | X=x, Z=z)$.}   

\begin{proof}[Proof of Proposition \ref{prop1}]   
 For the individual with observation $(X =x, Z=z, Y= y)$, we denote $(y_x, y_{x'})$ as the true values of  ($Y_x, Y_{x'}$). Assumption \ref{assump3} implies that for this individual,  
 its rankings of $y_{x}$ and $y_{x'}$ are the same in the distributions of $\P(Y_x | Z=z)$ and $\P( Y_{x'} | Z=z )$,  
 respectively. Therefore, 
       \begin{equation} 
              \P( Y_x \leq y_x  | Z=z)  =  \P(  Y_{x'} \leq y_{x'}  | Z=z ).  
       \end{equation}

Since $y_x = y$ is observed and the distributions $\P(Y_x| Z=z) $ and $\P(Y_{x'}| Z=z)$ can be identified as $\P(Y | X=x, Z=z)$ and $ \P(Y | X=x', Z=z)$, respectively, by the backdoor criterion (i.e., $(Y_x, Y_{x'}) \indep X | Z$),  we can identify the quantile of $y_x$ in the distribution of $\P(Y | X=x, Z=z)$, denoted by $\tau^*$.  Then
	\[     \P(  Y_{x'} \leq y_{x'}  | Z=z ) = \tau^*,  \]
which yields that $\theta $ is identified as the $\tau^*$-quantile of  
   $\P(Y| X=x', Z=z)$. 

\end{proof}

The following Proposition \ref{prop2}$^*$ serves as a complement to Proposition \ref{prop2}. 

{\bf Proposition \ref{prop2}$^*$}  
Under Assumption \ref{assump1}, or more generally, if $U_x$ is a strictly monotone increasing function of $U_{x'}$, Assumption \ref{assump3} is equivalent to Assumption \ref{assump2}.

\begin{proof}[Proof of Proposition \ref{prop2}]  According to the model (\ref{eq1}), we can equivalently write 
  	 \[ Y_x = f_Y(x, z, U_{x}), ~  Y_{x'} = f_Y(x', z, U_{x'}). \]
Suppose that $U_x$ is a strictly monotone increasing function of $U_{x'}$ (Assumption \ref{assump1}, i.e., $U_x = U_{x'}$, is a special case of it). Under this condition, we next prove sufficiency and necessity, respectively. 

First, we show that Assumption \ref{assump2} implies Assumption \ref{assump3}. If Assumption \ref{assump2} holds, then $Y_x$ is a strictly monotonic function of $U_x$, and $Y_{x'}$ is a strictly monotonic function of $U_{x'}$. Since $U_x$ is a strictly monotone increasing function of $U_{x'}$, then $Y_x$ is a strictly increasing monotonic function of $Y_{x'}$, which leads to Assumption \ref{assump3}.  

Second, we show that  Assumption \ref{assump3} implies Assumption \ref{assump2}.  If Assumption \ref{assump3} holds, then given $Z=z$, 
   $Y_x$ is a strictly increasing function of $Y_{x'}$. 
When $U_x$ is a strictly monotone increasing function of $U_{x'}$ and note that    
    \[ Y_x = f_Y(x, z, U_{X}), ~  Y_{x'} = f_Y(x', z, U_{X}), \]
which implies that $f_Y$ is a strictly monotonic function of $U_X$, i.e.,  Assumption \ref{assump2} holds. 

 This finishes the proof. 
 
\end{proof}

   {\bf Proposition \ref{prop3}}  \emph{Under Assumption \ref{assump4}, the conclusion in Proposition \ref{prop1} also holds.}   

\begin{proof}[Proof of Proposition \ref{prop3}]  This can be shown through a proof analogous to that of Proposition \ref{prop1}.  

\end{proof} 


%

\section{Proofs in Section \ref{sec6}} \label{app-b} 

Recall that $l_{\tau}(\xi) = \tau \xi \cdot \mathbb{I}(\xi \geq 0)+ (\tau - 1)\xi \cdot \mathbb{I}(\xi < 0)$, and 
        \begin{align*}
        q(x, z; \tau) \triangleq{}& \inf_{y}\{y: \P( Y \leq y | X=x, Z=z) \geq \tau \} \\ 
              q_0(z; \tau)  \triangleq{}& \inf_{y}\{y: \P( Y_0 \leq y | Z=z) \geq \tau \} \\
                 q_1(z; \tau)   \triangleq{}& \inf_{y}\{y: \P( Y_1 \leq y | Z=z) \geq \tau \}. 
        \end{align*}

   {\bf Lemma \ref{prop4}}  We have that

\emph{(i) $q_x(Z; \tau) = \arg\min_{f} \bfE[ l_{\tau}(Y_x - f(Z))]$ for $x= 0 , 1$;} 

\emph{(ii)  $q(X, Z; \tau) = \arg\min_{f} \bfE[ l_{\tau}(Y - f(X, Z))]$.}


\begin{proof}[Proof of Lemma \ref{prop4}] We prove $q_x(Z; \tau) = \arg\min_{f} \bfE[ l_{\tau}(Y_x - f(Z))]$, and   
$q(X, Z; \tau) = \arg\min_{f} \bfE[ l_{\tau}(Y - f(X, Z))]$ can be derived by an exactly similar manner. We write 
\[ \bfE[ l_{\tau}(Y_x - f(Z))] = \bfE [ \bfE \{ l_{\tau}(Y_x - f(Z)) \mid Z\} ].   \]
To obtain the conclusion, note that $l_{\tau}(Y_x - f(Z))$ is always positive, it suffices to show that  
    \begin{equation} \label{eq-s5}
        q_x(z; \tau) = \arg \min_{f} \bfE [ l_{\tau}(Y_x - f(Z)) \mid Z = z ] 
    \end{equation}  
for any given $Z=z$. Next, we focus on analyzing the term $\bfE[ l_{\tau}(Y_x - f(Z)) \mid Z = z ]$. Given $Z=z$, $f(Z)$ is a constant and we denote it by $c$, then 
    \begin{align*}
         & \bfE[ l_{\tau}(Y_x - f(Z)) \mid Z = z ]  \\
       ={}&  \bfE[ l_{\tau}(Y_x - c) \mid Z = z ] \\
       ={}& \bfE\Big [  \tau (Y_x - c) \mathbb{I}(Y_x \geq c) + (\tau - 1) (Y_x - c) \mathbb{I}(Y_x < c)   \mid Z=z \Big ] \\
       ={}&  \tau \int_c^{\infty} (y_x - c) g(y_x|z) dy_x  + (\tau -1) \int_{-\infty}^c (y_x - c) g(y_x|z) dy_x,   
    \end{align*}
where $g(y_x|z)$ denotes the probability density function of $Y_x$ given $Z=z$.    

Since the check function is a convex function, differentiating $\bfE[ l_{\tau}(Y_x - c) \mid Z = z ]$ with respect to $c$ and setting the derivative to zero will yield the solution for the minimum
   \begin{align*}
       & \frac{\partial}{\partial c} \bfE[ l_{\tau}(Y_x - c) \mid Z = z ] \\
      ={}& \tau \int_c^{\infty} \frac{\partial}{\partial c}[(y_x - c) g(y_x|z)] dy_x + (\tau - 1) \int_{-\infty}^c \frac{\partial}{\partial c}[(y_x - c) g(y_x|z)] dy_x \\
      ={}& - \tau \Big (1 - \int_{-\infty}^{c} g(y_x|z)dy_x \Big ) + (1 - \tau) \int_{-\infty}^c g(y_x|z)dy_x.  
   \end{align*}
Then let $ \frac{\partial}{\partial c} \bfE[ l_{\tau}(Y_x - c) \mid Z = z ] = 0$ leads to that 
 \[      \int_{-\infty}^{c} g(y_x|z)dy_x = \tau,  \]
that is, $c = q_x(z; \tau)$. This completes the proof of Proposition \ref{prop4}. 

\end{proof}

{\bf Theorem \ref{thm-1}.} \emph{If the probability density function of $Y$ given $Z$ is continuous, then the loss  $R_{x'}(t; x, z, y)$ is minimized uniquely at $t^*$, where $t^*$ is the solution satisfying  
    \[   \P(Y_{x'} \leq t^*  |  Z=z ) = \P(Y_x \leq y  |  Z=z ).  \]}

\begin{proof}[Proof of Theorem \ref{thm-1}]
Recall that 
\begin{align*}
     R_{x'}(t| x, z, y)
   ={}&    \bfE \left [  | Y_{x'} - t |  ~ \Big | ~ Z=z \right ]   + \bfE \left [ \text{sign}(Y_x - y)  ~ \Big | ~  Z = z\right ] \cdot t. 
 \end{align*}
Let  $g(y_x|z)$ be the probability density function of $Y_x$ given $Z=z$.  By calculation, 
 \begin{align*}
       \bfE \left [  | Y_{x'} - t |  ~ \Big | ~ Z=z \right ] =   \int_{t}^{\infty} (y_{x'} - t) g(y_{x'}|z) dy_{x'} + \int_{-\infty}^{t} (t - y_{x'})  g(y_{x'}|z) dy_{x'}, 
 \end{align*} 
  \begin{align*}
       \frac{\partial}{\partial t}  \bfE \left [  | Y_{x'} - t |  ~ \Big | ~ Z=z \right ] 
       = -\Big( 1-  \int_{-\infty}^{t} g(y_{x'}|z) dy_{x'} \Big) + \int_{-\infty}^{t} g(y_{x'}|z) dy_{x'} 
       =  2 \P( Y_{x'} \leq t | Z = z) - 1,
  \end{align*}
and 
  \begin{align*}
         \bfE \left [  \text{sign}(Y_x - y)   ~ \Big | ~  Z = z\right ]
      = \bfE  \left [ -2 \mathbb{I}(Y_x \leq y) + 1  ~ \Big | ~  Z = z\right ] 
      = - 2 \P( Y_x \leq y | Z=z) + 1,  
  \end{align*}
we have   
    \begin{align*}
       \frac{\partial}{\partial t} R_{x'}(t| x, z, y) ={}& 2 \P(Y_{x'} \leq t | Z=z) - 1 +  \bfE \left [  \text{sign}(Y - y)   ~ \Big | ~  Z = z\right ] \\
       ={}& 2 \P(Y_{x'} \leq t | Z=z) - 1 - 2 \P(Y_x \leq y| Z=z ) +1 \\
       ={}& 2 \Big \{ \P(Y_{x'} \leq t | z)  -  \P(Y_{x} \leq y | z) \Big  \}.   
      \end{align*}
 Since 
    \begin{align*}
        \frac{\partial^2}{\partial t^2} R_{x'}(t| x, z, y) = 2 \partial \P(Y_{x'} \leq t | z) / \partial t = 2 g(y_{x'}=t|z)  \geq 0,
    \end{align*}
 $R_{x'}(t| x, z, y)$ is a convex function with respect to $t$. 
Letting $\frac{\partial}{\partial t} R_{x'}(t| x, z, y) = 0$ yields that 
     \[  \P(Y_{x'} \leq t | z)  -  \P(Y_{x} \leq y | z) = 0.     \]
That is, $R_{x'}(t| x, z, y)$ attains its minimum at $t = q_{x'}(z; \tau^*)$, where $\tau^*$ is the quantile of $y$ in the distribution $\P(Y_x| Z=z)$. 

\end{proof}

{\bf Proposition \ref{prop5}.} \emph{If $h \to 0$ as $N \to \infty$, $\hat p_x(z)$ and $\hat p_{x'}(z)$ are consistent estimates of $p_x(z)$ and $p_{x'}(z)$, and the density function of $Z$ is differentiable, then 
$$\hat R_{x'}(t; x, z, y) \xrightarrow{\P} R_{x'}(t; x, z, y),$$
where $\xrightarrow{\P}$ means convergence in probability.}

\begin{proof}[Proof of Proposition \ref{prop5}] 
For analyzing the theoretical properties of $\hat R_{x'}(t; x, z, y)$, we rewritten $\hat R_{x'}(t; x, z, y)$ as  
  \begin{align*}
        \hat R_{x'}(t; x, z, y) ={} \frac{ \sum_{k=1}^N K_h(Z_k -z) \frac{\mathbb{I}(X_k=x')}{\hat p_{x'}(Z_k)} | Y_k - t |  }{ \sum_{k=1}^N K_h(Z_k -z) } + \frac{ \sum_{k=1}^N K_h(Z_k -z) \frac{\mathbb{I}(X_k=x)}{\hat p_{x}(Z_k)} \cdot \text{sign}(Y_k - y) }{  \sum_{i=1}^N K_h(Z_k -z) }  \cdot t, 
  \end{align*}
where the capital letters denote random variables and lowercase letters denote their realizations. This is slightly different from that used in the main text. 

When $\hat p_x(z)$ and $\hat p_{x'}(z)$ are consistent estimates of $p_x(z)$ and $p_{x'}(z)$, to show the conclusion, it is sufficient to prove that 
  \begin{align}
      \frac{  \sum_{k=1}^N K_h(Z_k -z) \frac{\mathbb{I}(X_k=x')}{p_{x'}(Z_k)} | Y_k - t |  }{   \sum_{k=1}^N K_h(Z_k -z) } 
      \xrightarrow{\P}{}&    \bfE \left [ \frac{\mathbb{I}(X=x')}{p_{x'}(z)} | Y - t |  ~ \Big | ~ Z=z \right ] =  \bfE \left [  | Y_{x'} - t |  ~ \Big | ~ Z=z \right ],  \label{eq-s6}  \\
       \frac{  \sum_{k=1}^N K_h(Z_k -z) \frac{\mathbb{I}(X_k=x)}{p_{x}(Z_k)} \cdot \text{sign}(Y_k - y) }{  \sum_{i=1}^N K_h(Z_k -z) }    \xrightarrow{\P}{}&   \bfE \left [ \frac{\mathbb{I}(X=x)}{p_{x}(z)} \cdot \text{sign}(Y - y)  ~ \Big | ~  Z = z\right ]= \bfE \left [ \text{sign}(Y_x - y)  ~ \Big | ~  Z = z\right ].  \label{eq-s7}   
  \end{align}
  We prove equation (\ref{eq-s6}) only, as equation (\ref{eq-s7}) can be addressed similarly.

Note that 
\[      \frac{  \sum_{k=1}^N K_h(Z_k -z) \frac{\mathbb{I}(X_k=x')}{p_{x'}(Z_k)} | Y_k - t |  }{   \sum_{k=1}^N K_h(Z_k -z) }  =  \frac{ \frac 1 N \sum_{k=1}^N K_h(Z_k -z) \frac{\mathbb{I}(X_k=x')}{p_{x'}(Z_k)} | Y_k - t |  }{  \frac 1 N \sum_{k=1}^N K_h(Z_k -z) },   \]
we analyze the denominator and numerator on the right side of the equation separately. For the denominator, it is an average of $N$ independent random variables and converges to its expectation 
   $\bfE[ K_{h}(Z_k -z)]$
almost surely. Let  $g(z_k)$ be the probability density function of $Z_k$, and $g^{(1)}(z_k)$ is its first derivative.  
Since
     \begin{align}
         \bfE[ K_{h}(Z_k -z)] ={}& \int \frac{1}{h} K(\frac{z_k - z}{h}) g(z_k) dz_k  \notag \\
         ={}& \int K(u) g(z+hu)  du \qquad \text{(let $z_k=z+hu$)} \notag\\
         ={}& \int K(u)\cdot \{g(z) + g^{(1)}(z) h u + o(h) \} du \qquad \text{(by Taylor Expansion)} \notag \\
         ={}& g(z) \int K(u)du +  g^{(1)}(z) h \int K(u)u du + o(h) \notag\\
         ={}& g(z) + o(h)  \qquad \text{(by the definition of kernel function),}       \label{eq-s8}
     \end{align}
   when $h\to 0$ as $N \to \infty$,     
the denominator converges to $g(z)$ in probability.  

Next, we focus on dealing with the numerator, which also converges to its expectation.  
   \begin{align}
    & \bfE[ K_h(Z_k -z) \frac{\mathbb{I}(X_k=x')}{p_{x'}(Z_k)} | Y_k - t | ] \notag \\
   ={}& \bfE \Big [ K_h(Z_k -z) \bfE  \Big\{ \frac{\mathbb{I}(X_k=x')}{p_{x'}(Z_k)} | Y_{k} - t | \Big | Z_k   \Big\}   \Big ] \qquad \text{(by the law of iterated expectations)} \notag \\
   ={}&  \bfE \Big [ K_h(Z_k -z) \bfE  \Big\{ \frac{\mathbb{I}(X_k=x')}{p_{x'}(Z_k)} | Y_{x',k} - t | \Big | Z_k   \Big\}   \Big ] \qquad \text{(write $Y_k$ as the form of potential outcome)} \notag \\
   ={}&  \bfE \Big [ K_h(Z_k -z) \bfE  \Big\{ | Y_{x',k} - t | \Big | Z_k   \Big\}   \Big ]   \qquad \text{(by backdoor criterion $Y_{x',k}\indep X_k | Z_k$)}. \label{eq-s9}
   \end{align}
Define $m(Z) = \bfE[ |Y_{x'} -t| \big | Z]$ and $m^{(1)}(Z)$ is its first derivative, then the right side of equation (\ref{eq-s6}) is $m(z)$, and 
    \begin{align}
         \bfE \Big [ &  K_h(Z_k -z) \cdot \bfE  \Big\{ | Y_{x',k} - t | \Big | Z_k   \Big\}   \Big ] 
       ={} \bfE \Big [ K_h(Z_k -z) \cdot m(Z_k) \Big ] \notag\\
       ={}& \int \frac{1}{h} K(\frac{z_k-z}{h}) \cdot m(z_k) \cdot g(z_k) dz_k \notag \\
       ={}& \int K(u) \cdot m(z+hu) \cdot g(z+hu) du \qquad \text{(let $z_k = z+ hu$)}\notag \\
       ={}& \int K(u) \cdot \{m(z) + m^{(1)}(z) h u + o(h)   \} \cdot \{g(z) + g^{(1)}(z) h u + o(h)  \} du \qquad \text{(by Taylor Expansion)} \notag \\
       ={}&  m(z) g(z) + o(h). \label{eq-s10}
    \end{align}
Thus,  when $h\to 0$ as $N \to \infty$,  the numerator converges to $g(z)$ in probability.   

Combining equations (\ref{eq-s8}), (\ref{eq-s9}), and (\ref{eq-s10}) yields the equality (\ref{eq-s6}). This completes the proof. 

\end{proof}

{\bf Theorem \ref{thm5-4}} (Unbiasedness Preservation). \emph{ 
The loss $R_{x'}^{\mathrm{weight}}(t| x, z, y)$ is convex in terms of $t$ and is minimized uniquely at $t^*$, where  $t^*$ is the solution satisfying 
       $\P(Y_{x'} \leq t^* | Z=z) = \P(Y_x \leq y| Z=z).$}

\begin{proof}[Proof of Theorem \ref{thm5-4}]. 
It is sufficient to show that $R_{x'}^{\mathrm{weight}}(t| x, z, y)$ shares the same minimizer as $R_{x'}(t| x, z, y)$. By the backdoor criterion (i.e., $(Y_x, Y_{x'} \indep X | Z)$), we have that 
  \begin{align*}
        & R_{x'}^{\mathrm{weight}}(t| x, z, y) \\
       ={}& \bfE \left [ w(X, Z)   | Y_{x'} - t |  \big |  Z=z \right ]+\bfE \left [ w(X, Z)\text{sign}(Y_x - y)   \big |   Z = z\right ] \cdot t \\  
        ={}& \bfE[ w(X, Z) \mid Z=z ] \cdot  \bfE \left [   | Y_{x'} - t |  ~ \big | ~ Z=z \right ]  \\
        {}& + \bfE[ w(X, Z) \mid Z=z ] \cdot  \bfE \left [ w(X, Z) \cdot \text{sign}(Y_x - y)  ~ \big | ~  Z = z\right ] \cdot t  \\
        ={}&  \bfE[ w(X, Z) \mid Z=z ] \cdot R_{x'}(t| x, z, y).
  \end{align*}
Note that $\bfE[ w(X, Z) \mid Z=z ]$ is not related to $t$.  Consequently, $R_{x'}^{\mathrm{weight}}(t|x, z, y)$ shares the same unique minimizer as $R_{x'}(t|x, z, y)$. Both minimizers satisfy the condition 
       $\P(Y_{x'} \leq t^* | Z=z) = \P(Y_x \leq y| Z=z).$


\end{proof}

\medskip

{\bf Proposition \ref{prop5-5}} (Bias of the Estimated Loss). 
 \emph{If $h \to 0$ as $N \to \infty$,  $p_x(z)/\hat p_x(z)$ and $p_{x'}(z)/\hat p_{x'}(z)$ are differentiable with respect to $z$, and the density function of $Z$ is differentiable, then the bias of $\hat R_{x'}(t| x, z, y)$, defined by $\bfE[ \hat R_{x'}(t|x,z,y) ] - R_{x'}(t|x,z,y)$, is given as 
    \begin{align*}
        \text{Bias}(&\hat R_{x'})  ={}
     \delta_{p_{x'}}  \bfE \left [  | Y_{x'} - t |  ~ \big | ~ Z=z \right ]  + \delta_{p_{x}}  \bfE \left [ \text{sign}(Y_x - y)  ~ \big | ~  Z = z\right ] \cdot t + O(h^2), 
    \end{align*} 
where $\delta_{p_{x'}}  = (p_{x'}(z) - \hat p_{x'}(z))/\hat p_{x'}(z)$ and $\delta_{p_{x}}  = (p_{x}(z) - \hat p_{x}(z))/\hat p_{x}(z)$ are estimation errors of propensity scores. }

\begin{proof}[Proof of Proposition \ref{prop5-5}].  
This proof is similar to the proof of Proposition \ref{prop5}. Recall that 
  \begin{align*}
        \hat R_{x'}(t; x, z, y) ={} \frac{ \sum_{k=1}^N K_h(Z_k -z) \frac{\mathbb{I}(X_k=x')}{\hat p_{x'}(Z_k)} | Y_k - t |  }{ \sum_{k=1}^N K_h(Z_k -z) } + \frac{ \sum_{k=1}^N K_h(Z_k -z) \frac{\mathbb{I}(X_k=x)}{\hat p_{x}(Z_k)} \cdot \text{sign}(Y_k - y) }{  \sum_{i=1}^N K_h(Z_k -z) }  \cdot t, 
  \end{align*}
and 
 \[  R_{x'}(t| x, z, y) = \bfE \left [  | Y_{x'} - t |  ~ \big | ~ Z=z \right ]   + \bfE \left [ \text{sign}(Y_x - y)  ~ \big | ~  Z = z\right ] \cdot t, \]
Given the estimated propensity scores, to show the conclusion, it suffices to prove that 
  \begin{align}
     \bfE \left [  \frac{  \sum_{k=1}^N K_h(Z_k -z) \frac{\mathbb{I}(X_k=x')}{\hat p_{x'}(Z_k)} | Y_k - t |  }{   \sum_{k=1}^N K_h(Z_k -z) } \right ]  ={}&  
     \frac{p_{x'}(z)}{\hat p_{x'}(z)} \bfE \left [  | Y_{x'} - t |  ~ \big | ~ Z=z \right ] + O(h^2). \label{eq-s11}  \\
        \bfE \left [  \frac{  \sum_{k=1}^N K_h(Z_k -z) \frac{\mathbb{I}(X_k=x)}{p_{x}(Z_k)} \cdot \text{sign}(Y_k - y) }{  \sum_{i=1}^N K_h(Z_k -z) }  \right ]  ={}&   \frac{p_{x}(z)}{\hat p_{x}(z)} \bfE \left [ \text{sign}(Y_x - y)  ~ \big | ~  Z = z\right ] \cdot t + O(h^2). \label{eq-s12}   
  \end{align}
  We prove the equation (\ref{eq-s11}) only, as remaining equation (\ref{eq-s12}) can be addressed similarly.  
  Let  $g(z_k)$ be the probability density function of $Z_k$ and  $m(Z) = \bfE[ |Y_{x'} -t| \big | Z]$.   
By the proof of  Proposition \ref{prop5}, $N^{-1}\sum_{k=1}^N K_h(Z_k -z) = g(z) + O(h^2)$, and 
   \begin{align*}
   &  \bfE \left [  \frac 1 N \sum_{k=1}^N K_h(Z_k -z) \frac{\mathbb{I}(X_k=x')}{\hat p_{x'}(Z_k)} | Y_k - t |  \right ] \\
    & \bfE[ K_h(Z_k -z) \frac{\mathbb{I}(X_k=x')}{\hat p_{x'}(Z_k)} | Y_k - t | ] \notag \\
   ={}&  \bfE \Big [ K_h(Z_k -z) \cdot \frac{ p_{x'}(Z_k)}{\hat p_{x'}(Z_k)} \bfE  \Big\{ | Y_{x',k} - t | \Big | Z_k   \Big\}   \Big ]    \\
      ={}&  \bfE \Big [ K_h(Z -z) \cdot \frac{ p_{x'}(Z)}{\hat p_{x'}(Z)}\cdot m(Z) \Big ]    \\
       ={}&  \frac{ p_{x'}(z)}{\hat p_{x'}(z)} m(z) g(z) + O(h^2).
    \end{align*}
Thus, equation (\ref{eq-s11}) holds using Slutsky's Theorem.  This completes the proof.

\end{proof}

\medskip

{\bf Theorem \ref{thm5-6}} (Bias of the Estimated Counterfactual Outcome). Under the same conditions as in Proposition \ref{prop5-5}, $\hat y_{x'}$ converges to $\bar y_{x'}$  in probability, where $\bar y_{x'}$ satisfies that 
    \begin{equation} \label{eq3}  \P( Y_{x'} \leq \bar y_{x}|Z=z ) = \frac{ (2+2\delta_{p_{x}}) \P(Y_{x}\leq y| Z=z) + (\delta_{p_{x'}}- \delta_{p_{x}})  }{ 2 + 2\delta_{p_{x'}}  },
    \end{equation}
 where $\bar y_{x'}$ may not equal to the true value $y_{x'}$ and their difference is the bias.

\begin{proof}[Proof of Theorem \ref{thm5-6}]
By the property of M-estimation~\citep{vdv-1996}, $\hat y_{x'}$ converges to $\bar y_{x'}$  in probability, where $\bar y_{x'}$ is the minimizer of $$\bar R_{x'}(t; x, z, y),$$ 
 where $\bar R_{x'}(t; x, z, y)$ is the probability limit of $\hat R_{x'}(t; x, z, y)$. 
By the proof of Proposition \ref{prop5-5}, 
 \[    \bar R_{x'}(t; x, z, y) =  \frac{p_{x'}(z)}{\hat p_{x'}(z)} \bfE \left [  | Y_{x'} - t |  ~ \big | ~ Z=z \right ]  + \frac{p_{x}(z)}{\hat p_{x}(z)} \bfE \left [ \text{sign}(Y_x - y)  ~ \big | ~  Z = z\right ] \cdot t.    \]
By a similar proof of Theorem \ref{thm-1}, as $h \to 0$, $\bar y_{x'} = \arg\min_{t} \bar R_{x'}(t; x, z, y) $ satisfies that 
    \begin{equation*}  \P( Y_{x'} \leq \bar y_{x}|Z=z ) = \frac{ (2+2\delta_{p_{x}}) \P(Y_{x}\leq y| Z=z) + (\delta_{p_{x'}}- \delta_{p_{x}})  }{ 2 + 2\delta_{p_{x'}}  },
    \end{equation*}
\end{proof}
\section{Extension to Continuous Outcome} \label{app-d} 

When the treatment is continuous, we can estimate the ideal loss with the following estimator
 $$ \tilde R_{x'}(t| x, z, y) = \frac{ \sum_{k=1}^N K_h(z_k -z) \frac{K_h(x_k-x') }{p_{x'}(z_k)} | y_k - t |  }{ \sum_{k=1}^N K_h(z_k -z) }  + \frac{ \sum_{k=1}^N K_h(z_k -z) \frac{K_h(x_k - x) }{p_{x}(z_k)} \cdot \text{sign}(y_k - y) }{  \sum_{k=1}^N K_h(z_k -z) }  \cdot t, $$
which is a smoothed version of the estimator 
  \begin{align*}
        \hat R_{x'}(t| x, &z, y) ={} \frac{ \sum_{k=1}^N K_h(z_k -z) \frac{\mathbb{I}(x_k=x')}{p_{x'}(z_k)} | y_k - t |  }{ \sum_{k=1}^N K_h(z_k -z) }  + \frac{ \sum_{k=1}^N K_h(z_k -z) \frac{\mathbb{I}(x_k=x)}{p_{x}(z_k)} \cdot \text{sign}(y_k - y) }{  \sum_{k=1}^N K_h(z_k -z) }  \cdot t,
  \end{align*}
 defined in Section \ref{sec6}. In addition, by a proof similar to that of Proposition \ref{prop5}, we also can show that 
 $\tilde R_{x'}(t; x, z, y) \xrightarrow{\P} R_{x'}(t; x, z, y).$


\section{Experiment Details and Additional Experiment Results} \label{app-e}  
We run all experiments on the Google Colab platform. For the representation model, we use the MLP for the base model and tune the layers in $\{1, 2, 3\}$. In addition, we adopt the logistic regression model as the propensity model. We tune the learning rate in $\{0.001, 0.005, 0.01, 0.05, 0.1\}$. For the kernel choice, we select the kernel function between the Gaussian kernel function and the Epanechnikov kernel function, and tune the bandwidth in $\{1, 3, 5, 7, 9\}$.

In addition, we investigate the performance of our method when the rank preservation assumption is violated. Specifically, we simulate $ Y_1 = (W_y + W_{y1})\cdot Z + U_{1}$ and $Y_0 =  W_y \cdot Z/\alpha + U_{0}$ with $W_y \sim \mathcal{N}(0,I_{m})$ and $W_{y1} \sim \mathcal{N}(0,\beta I_{m})$, where $\beta$ is the hyper parameter to control the violation degree. In the added experiments, m is in {10, 40}, which is consistent with the experiments in our original manuscript, and Kendall’s rank correlation coefficient $\rho(Y_0, Y_1)$ is in {0.3, 0.5}. We generate 10 different datasets with different seeds and the experiment results are shown in Table \ref{tab:sim_rank}. The experiment results show that our method still outperforms the baseline methods, even if the rank preservation assumption is violated.
\begin{table*}[]
\centering
\caption{$\sqrt{\epsilon_{\text{PEHE}}}$ of individual treatment effect estimation on the simulated Sim-$m$ dataset, where $m$ is the dimension of $Z$.}
\resizebox{1\linewidth}{!}{
\begin{tabular}{l|ll|ll|ll|ll}
\toprule  & \multicolumn{2}{c|}{Sim-10 (Rank=0.3)}   & \multicolumn{2}{c|}{Sim-10 (Rank=0.5)}  & \multicolumn{2}{c|}{Sim-40 (Rank=0.3)}   & \multicolumn{2}{c}{Sim-40 (Rank=0.5)}
\\ \midrule
Methods   & \multicolumn{1}{c}{In-sample} & \multicolumn{1}{c|}{Out-sample} & \multicolumn{1}{c}{In-sample} & \multicolumn{1}{c|}{Out-sample} & \multicolumn{1}{c}{In-sample} & \multicolumn{1}{c|}{Out-sample} & \multicolumn{1}{c}{In-sample} & \multicolumn{1}{c}{Out-sample} \\
\cmidrule{1-9}
TARNet     & 6.92 $\pm$  0.79          & 6.96 $\pm$  0.82          & 4.38 $\pm$  0.59          & 4.38 $\pm$  0.60          & 14.91 $\pm$  0.61         & 14.80 $\pm$  0.82         & 9.33 $\pm$  0.36          & 9.33 $\pm$  0.45          \\
DragonNet  & 6.97 $\pm$  0.83          & 7.02 $\pm$  0.88          & 4.43 $\pm$  0.62          & 4.44 $\pm$  0.62          & 15.04 $\pm$  0.54         & 14.94 $\pm$  0.77         & 9.18 $\pm$  0.30          & 9.16 $\pm$  0.50          \\
ESCFR      & 6.91 $\pm$  0.69          & 6.96 $\pm$  0.76          & 4.41 $\pm$  0.65          & 4.41 $\pm$  0.68          & 15.01 $\pm$  0.58         & 14.86 $\pm$  0.82         & 9.25 $\pm$  0.18          & 9.26 $\pm$  0.31          \\
X\_learner & 7.01 $\pm$  0.81          & 7.05 $\pm$  0.85          & 5.29 $\pm$  0.44          & 4.98 $\pm$  0.48          & 15.16 $\pm$  0.64         & 15.05 $\pm$  0.88         & 9.43 $\pm$  0.29          & 9.40 $\pm$  0.42          \\
Quantile-Reg & 6.79 $\pm$  0.79          & 6.63 $\pm$  0.84          & 4.12 $\pm$  0.63          & 4.14 $\pm$  0.65          & 13.12 $\pm$  0.62         & 13.25 $\pm$  0.87         & 8.39 $\pm$  0.30          & 8.43 $\pm$  0.42          \\
Ours       & \textbf{6.00 $\pm$  1.22}* & \textbf{6.01 $\pm$  1.29} & \textbf{3.88 $\pm$  1.25} & \textbf{3.90 $\pm$  1.30} & \textbf{10.76 $\pm$  0.52}* & \textbf{10.84 $\pm$  0.43}* & \textbf{6.75 $\pm$  0.24}* & \textbf{6.80 $\pm$  0.21}* \\
\bottomrule
\end{tabular}}
\label{tab:sim_rank}
\end{table*}

\end{document}